\keywords{Planning, automata, synthesis}
\definecolor{primary}{HTML}{87cefa}
\Crefname{thm}{Theorem}{Theorems}
\Crefname{lem}{Lemma}{Lemmas}
\Crefname{defi}{Definition}{Definitions}
\newtheorem{observation}[thm]{Observation}
\Crefname{observation}{Observation}{Observations}
\begin{document}

\title[Controller Synthesis for Timeline-based Games]{Controller Synthesis for Timeline-based Games\rsuper*}
\titlecomment{{\lsuper*}A preliminary version of this article has appeared in the proceedings of GandALF 2022~\cite{Acampora_2022}.}
%\thanks{thanks, optional.} %optional

\author[R.~Acampora]{Renato Acampora}[a]
\author[L.~Geatti]{Luca Geatti\lmcsorcid{0000-0002-7125-787X}}[a]
\author[N.~Gigante]{Nicola Gigante\lmcsorcid{0000-0002-2254-4821}}[b]
\author[A.~Montanari]{Angelo Montanari\lmcsorcid{0000-0002-4322-769X}}[a]
\author[V.~Picotti]{Valentino Picotti\lmcsorcid{0000-0001-7713-1461}}[c]

\address{University of Udine, Italy}
\email{%
  acampora.renato@spes.uniud.it, luca.geatti@uniud.it, 
  angelo.montanari@uniud.it%
}

\address{Free University of Bozen-Bolzano, Italy}
\email{nicola.gigante@unibz.it}

\address{University of Southern Denmark}
\email{picotti@imada.sdu.dk}

\begin{abstract}
In the timeline-based approach to planning, the evolution over time of a set of
state variables (the timelines) is governed by a set of temporal constraints.
Traditional timeline-based planning systems excel at the integration of planning
with execution by handling \emph{temporal uncertainty}. In order to handle
general nondeterminism as well, the concept of \emph{timeline-based games} has
been recently introduced. It has been proved that finding whether a winning
strategy exists for such games is \EXPTIME[2]-complete. However, a concrete
approach to synthesize controllers implementing such strategies is missing. This
article fills the gap by providing an effective and computationally optimal
approach to controller synthesis for timeline-based games.
\end{abstract}

\maketitle

%!TeX root = ../lmcs-gandalf22.tex

\section{Introduction}
\label{sec:introduction}
Automated planning is the field of \emph{artificial intelligence} that studies the development of autonomous agents able of reasoning about how to reach some goals, starting from a high-level description of their operating environment. It is one of the most studied fields of AI, with early work going several decades back~\cite{McCarthyH69,FikesN71}.
Most of the research by the planning community focuses on the
\emph{action-based} approach, where planning problems are modeled in terms of
\emph{actions} that an agent has to perform to suitably change its \emph{state}.
The task is to devise a sequence of such actions that lead to the goal when
executed starting from a given initial state~\cite{FikesN71,FoxL03}.

In this paper, we focus on the alternative paradigm of \emph{timeline-based planning}, an approach born and developed in the space sector~\cite{Muscettola94}. In timeline-based planning, there is no explicit separation among actions, states, and goals. Planning domains are represented as systems of independent but interacting components, whose behavior over time, the \emph{timelines}, is governed by a set of temporal constraints, called \emph{synchronization rules}.

Over the years, timeline-based planning systems have been developed and successfully exploited by space agencies on both sides of the Atlantic~\cite{CestaCFOP06, CestaCDDFOPRS07, FrankJ03, BernardiniS07, ChienRKSEMESFBST00},  for short- to long-term mission planning~\cite{ChienRTTDNASVGA15} as well as on-board autonomy~\cite{FratiniCORD11}. The main advantage of such a paradigm in these contexts is the ability of these systems of handling both planning and \emph{execution} in a uniform way: by the use of \emph{flexible timelines}, timeline-based planners can produce robust plans that, during execution, can be adapted to the current contingency.

However, flexible timelines currently employed in timeline-based systems only
handle \emph{temporal uncertainty}, where the precise timings of events in the
plan are unknown, but the causal sequence of the events is determined.
In particular, they cannot generate robust plans against an environment empowered with
general \emph{nondeterminism}. To overcome this limitation, the concept of
\emph{timeline-based games} was recently introduced~\cite{GiganteMOCR20}. In
timeline-based games, state variables belong either to the controller or to the
environment. The controller aims at satisfying its set of \emph{system} rules, while
the environment can make arbitrary moves, as long as the \emph{domain} rules 
that define the game arena are satisfied. A controller's strategy is
winning if it guarantees that the controller wins, regardless of the choices
made by the environment. The moves available to the two players can determine
both \emph{what} happens and \emph{when} it happens, thus handling temporal uncertainty
and general nondeterminism in a uniform way.

Determining whether a winning strategy exists for timeline-based games has been
proved to be \EXPTIME[2]-complete~\cite{GiganteMOCR20}. However, there is
currently no effective way to synthesize a controller that implements such
strategies. A necessary condition for synthesizing a finite-state strategy and
the corresponding controller is the availability of a \emph{deterministic}
arena. Two methods to obtain such an arena have been followed in the literature, 
but both have limitations and turn out to be inadequate. On the one hand, 
the complexity result of~\cite{GiganteMOCR20} relies on the construction 
of a (doubly exponential) \emph{concurrent game structure} used to model
check some Alternating-time Temporal Logic formulas~\cite{AlurHK02}. 
Even though such a structure is deterministic and theoretically suitable 
to solve a reachability game and to synthesize a controller, its construction 
relies on theoretical nondeterministic procedures that are not realistically 
implementable. On the other hand, Della~Monica~\etal~\cite{DellaMonicaGMS18} 
devised an automata-theoretic solution that provides a concrete and effective 
way to construct an automaton that accepts a word if and only if the original 
planning problem has a solution plan. Unfortunately, the size of the resulting 
\emph{nondeterministic} automaton is already doubly exponential, and its
determinization would result in a further blowup and thus in a non-optimal procedure.

The present paper fills the gap by developing an effective and computationally
optimal approach to synthesizing controllers for timeline-based games. The
proposed method addresses the limitations of previous techniques by directly
constructing a \emph{deterministic} finite-state automaton of an optimal
doubly-exponential size, that recognizes solution plans. Such an automaton can be
turned into the arena for a reachability game, for which many controller
synthesis techniques are available in the literature.
The paper is a significantly revised and extended version of~\cite{Acampora_2022}. 
It provides a detailed account of the general framework, gives some illustrative 
examples, and fully works out all the proofs. 

The rest of the paper is organized as follows. After discussing related work in 
\cref{sec:related}, \cref{sec:preliminaries} introduces timeline-based planning and games.
\Cref{sec:automaton} presents the main technical contribution of the paper,
namely, the construction of the deterministic automaton that recognizes solution
plans. \Cref{sec:games} shows how to turn such an automaton into the arena of a
suitable game from which the controller can be synthesized. \cref{sec:conclusions} 
summarizes the main contributions of the work and suggest future research directions. 
All the technical proofs are included in the appendix.

%!TeX root = ../lmcs-gandalf22.tex

\section{Related work}
\label{sec:related}

The paradigm of timeline-based planning has been first introduced to plan and schedule scientific operations of the Hubble space telescope~\cite{Muscettola94}. In the following two decades, many timeline-based planning systems have been developed both at NASA and ESA, including EUROPA~\cite{BedraxWeissGBEI05}, ASPEN~\cite{ChienRKSEMESFBST00}, and APSI~\cite{DonatiPCFOCPSRNS08}. Such systems have been used both for short- to long-term mission planning, \eg for the renowned Rosetta mission~\cite{ChienRTTDNASVGA15}, and for onboard autonomy~\cite{FratiniCORD11}. Elements of the timeline-based and the action-based paradigm have been combined into the Action Notation Modeling Language (ANML)~\cite{SmithFC08}, extensively used at NASA since then.

Despite the real-world success, the timeline-based planning paradigm lacked a
thorough foundational understanding in contrast to the action-based one, which
has been extensively studied from a theoretical perspective from the
start~\cite{McCarthyH69,Bylander94}. To enable theoretical investigations into
timeline-based planning, Cialdea Mayer \etal~\cite{CialdeaMayerOU16} laid down
the core features of the paradigm, describing them in a uniform formalism, which
has  been later studied in several contributions. The formalism was compared to
traditional action-based languages like STRIPS, and it was proved that the
latter are expressible by timeline-based languages~\cite{GiganteMMO16}. The
timeline-based plan existence problem was proved to be
\EXPSPACE-complete~\cite{GiganteMMO17} over discrete time in the general case,
and \PSPACE-complete with qualitative constraints only \cite{DellaMonicaGTM20}. On
dense time, the problem goes from being \NP-complete to undecidable, depending
on the applied syntactic restrictions~\cite{BozzelliMMPW20}. Additionally,
logical~\cite{DellaMonicaGMSS17} and automata-theoretic~\cite{DellaMonicaGMS18}
counterparts have been investigated to study the expressiveness of timeline-based
languages.

The above body of work focuses on \emph{deterministic} timeline-based planning domains. However, the paradigm also fits to \emph{uncertain} domains requiring robust plans. Current timeline-based planning systems employ the concept of \emph{flexible timelines}, described as including uncertainty in the timings of events, representing envelopes of possible executions of the plan. Planners, when possible, produce \emph{strongly controllable} flexible plans, whose execution is then robust for the given temporal uncertainty. In order to obtain controllers for executing strongly controllable flexible plans, the problem can be simplified by reducing it to \emph{timed game automata}~\cite{OrlandiniFCF11}.

While the current approach works fairly well in handling temporal uncertainty, it does not support scenarios where the environment is fully nondeterministic. Furthermore, as pointed out in \cite{GiganteMOCR20}, the language of timeline-based planning as formalized in \cite{CialdeaMayerOU16} allows one to write domains that are not solvable by strongly controllable flexible plans, but that may easily be by strategies coping with general nondeterminism. For this reason, \cite{GiganteMOCR20} introduced the concept of \emph{timeline-based game}, which is the focus of this work. Timeline-based games adopt a game-theoretic point of view, where the controller and the environment play by constructing timelines, with the controller trying to fulfill its synchronization rules independently from the choices of the environment. This setting allows one to handle both temporal uncertainty and general nondeterminism, thus strictly generalizing previous approaches based on flexible timelines. In \cite{GiganteMOCR20}, the problem of deciding the existence of a winning strategy for a given timeline-based game has been proved to be \EXPTIME[2]-complete. The proof is based on the construction of a \emph{concurrent game structure} where a suitable \emph{alternating-time temporal logic}  (ATL) formula is model checked~\cite{AlurHK02}. However, the construction relies on nondeterministic procedures that are not effectively implementable, and thus it does not solve the problem of synthesizing actual controllers for timeline-based games. This work fills the gap by providing %a constructive and 
an effective synthesis algorithm.

The devised algorithm builds on classical results in the field of \emph{reactive synthesis}, which studies how to build correct-by-construction controllers satisfying high-level logical specifications. The original formulation of the problem of reactive synthesis is due to Church~\cite{church1962logic}. The problem  for \emph{S1S} specifications was later solved by Büchi and Landweber using a non-elementary complexity algorithm~\cite{buchi1990solving}. As for Linear Temporal Logic (LTL) specifications, the problem is \EXPTIME[2]-complete \cite{pnueli1989synthesis,rosner1992modular}, which, interestingly,  is the same complexity as timeline-based games. In both cases, the core of the synthesis algorithm is the construction of a \emph{deterministic} arena, where the game can be solved with a fix-point computation. This work focuses on constructing such an arena for timeline-based games (\cref{sec:automaton,sec:games}).
%!TeX root = ../lmcs-gandalf22.tex

\section{Preliminaries}
\label{sec:preliminaries}

In this section, we provide an overview of the general framework that underpins
our work. We begin by introducing the general features of timeline-based planning, and then we
discuss timeline-based games. Next, we introduce the reactive synthesis problem.
Finally, we recall the concept of \emph{difference bound matrices}
(DBMs)~\cite{dill1989timing,peron2007abstract}, which are the data structures that we will use
to represent the temporal constraints of a system.

\subsection{Timeline-based planning}
The first basic notion is that of \emph{state variable}.
\begin{defi}[State variable]
  \label{def:timelines:state-variable}
  A \emph{state variable} is a tuple $x=(V_x,T_x,D_x,\gamma)$, where:
  \begin{itemize}
  \item $V_x$ is the \emph{finite domain} of $x$;
  \item $T_x:V_x\to2^{V_x}$ is the \emph{value transition function} of $x$,
    which maps each value $v\in V_x$ to the set of values that can immediately
    follow it; %the value $v$;
  \item $D_x:V_x\to\N\times\N$ is the \emph{duration function} of $x$, mapping
    each value $v\in V_x$ to a pair $(\dmin, \dmax)$ specifying respectively the
    minimum and maximum duration of any interval where $x=v$;
  \item $\gamma:V_x\to\set{\mathsf{c},\mathsf{u}}$ is the 
    \emph{controllability tag}, that, for each value $v\in V_x$, specifies whether it is
    \emph{controllable} $\left(\gamma\left(v\right)=\mathsf{c}\right)$ or \emph{uncontrollable}
    $\left(\gamma\left(v\right)=\mathsf{u}\right)$.
  \end{itemize}
\end{defi}

A state variable  $x$ takes its values from a finite domain and represents a
finite state machine with a transition function $T_x$. The behavior over time of
a state variable $x$ is modeled by a timeline. Intuitively, a \emph{timeline}
for a state variable $x$ is a finite sequence of \textit{tokens}, that is,
contiguous time intervals where $x$ holds a given value.

Following the approach described in~\cite{GiganteMOCR20}, instead of formally
defining timelines in terms of tokens, we represent executions of timeline-based
systems as single words, called \emph{event sequences}, where each event
describe the start/end of some token in a given time point.

To this end, we first define the notion of action.

\begin{defi}
  Let $\SV$ be a set of state variables. An \emph{action} is a term of the form $\tokstart(x,v)$ or $\tokend(x,v)$, where $x\in\SV$ and $v\in V_x$.
\end{defi}

Actions of the form $\tokstart(x,v)$ are \emph{starting} actions, and those of the form $\tokend(x,v)$ are \emph{ending} actions. We denote by $\actions_\SV$ the set of all the actions definable over a set of state variables $\SV$.

\begin{defi}[Event sequence~\cite{GiganteMOCR20}]
  \label{def:event-sequence}
  Let $\SV$ be a set of state variables and $\actions_\SV$ be the set of all
  the \emph{actions} $\tokstart(x,v)$ and $\tokend(x,v)$, for $x\in\SV$ 
  and $v\in V_x$. An \emph{event sequence} over $\SV$ is a sequence
  $\evseq=\seq{\event_1,\ldots,\event_n}$ of pairs $\event_i=(A_i,\delta_i)$,
  called \emph{events}, where $A_i\subseteq\actions_\SV$ %is a set of actions
  and $\delta_i\in\N^+$, such that, for any $x\in\SV$:
  \begin{enumerate}
  \item \label{def:event-sequence:start}
        for all $1\le i\le n$, if $\tokstart(x,v)\in A_i$, for some $v\in V_x$,
        then there is no $\tokstart(x,v')$ in any $\event_j$ before the
        closest event $\event_k$, with $k > i$, such that $\tokend(x,v)\in A_k$ (if
        any);
  \item \label{def:event-sequence:end}
        for all $1\le i\le n$, if $\tokend(x,v)\in A_i$, for some $v\in V_x$,
        then there is no $\tokend(x,v')$ in any $\event_j$ after the
        closest event $\event_k$, with $k < i$, such that $\tokstart(x,v)\in A_k$ (if
        any);
  \item \label{def:event-sequence:gaps-right}
        for all $1\le i < n$, if $\tokend(x,v)\in A_i$, for some $v\in V_x$, then
        $\tokstart(x,v')\in A_i$, for some $v'\in V_x$;
  \item \label{def:event-sequence:gaps-left}
        for all $1< i \le n$, if $\tokstart(x,v)\in A_i$, for some $v\in V_x$,
        then $\tokend(x,v')\in A_i$, for some $v'\in V_x$.
  \end{enumerate}
\end{defi}

The first two conditions guarantee correct parenthesis placement by identifying
the start and the end of each token in the sequence. Condition 1 prevents a
token from starting before the end of the previous one, while condition 2
prevents the occurrence of two consecutive ends not interleaved by a start.
Conditions 3 and 4 ensure seamless continuity: each token's end (resp., start)
is consistently followed (preceded) by the start (resp., end) of another, except
for the first (resp., last) event in the sequence. These latter conditions
prevent gaps in the timeline description of the represented plan.

In event sequences, a \texttt{token} for a variable $x$ is a maximal interval
with at most one occurrence of events $\event_i=(A_i,\delta_i)$ and
$\event_j=(A_j,\delta_j)$, where $\tokstart(x,v)\in A_i$ and $\tokend(x,v)\in
A_j$, for some $v\in V_x$. We say such a token \emph{starts} at position $i$ and
\emph{ends} at position $j$. Note that \cref{def:event-sequence} implies that a
token that has started is not required to end before the end of the sequence and
that it can end without the corresponding starting action ever appearing. If
this is the case, we say that an event sequence is \emph{open} either to the right or
to the left. Otherwise, it is said to be \emph{closed}. An event sequence closed
to the left and open to the right is called a \emph{partial plan}. Notice that
the empty event sequence $\epsilon$ is closed on both sides for any variable.
Furthermore, in closed event sequences, the first event contains only start
actions, while the last one contains only end actions, one for each variable
$x$.

Given an event sequence $\evseq=\seq{\event_1,\ldots,\event_n}$ over a set of state variables $\SV$, with $\event_i=(A_i,\delta_i)$, we define $\delta(\evseq)$ as $\sum_{1<i\le n}\delta_i$, that is, $\delta(\evseq)$ is the time elapsed from the start to the end of the event sequence (its duration). For any subsequence $\seq{\event_i, \ldots,\event_j}$ of $\evseq$, abbreviated $\slice\evseq_{i,j}$, we denote by $\delta_{i,j}$ (or, equivalently, $\delta(\slice\evseq_{i,j})$) the amount of time spanning that subsequence. Notice that $\delta_{i,j}$ is defined as $\sum_{i<k\le j}\delta_k$. Finally, given an event sequence $\evseq=\seq{\event_1,\ldots,\event_n}$, we define $\evseq_{<i}$ as $\seq{\event_1,\ldots,\event_{i-1}}$, for each $1<i\le n$.

\smallskip

In timeline-based planning, the objective is to satisfy a set of
\emph{synchronization rules}, that specify the desired behavior of the system
(constraints and goal). These rules relate tokens, possibly belonging to
different timelines, through temporal relations among their endpoints. Let \SV
be a set of state variables and $\toknames = \set{a,b,\ldots}$ be a set of
\emph{token names}. 

\begin{defi}[Atom]
    \label{def:atom}
    An atom is a temporal relation between tokens' endpoints of the form $\production{term} \before_{l,u} \production{term}$, where $l\in\N$, $u\in\N\cup\set{+\infty}$, $l \le u$, and a \emph{term} is either $\tokstart(a)$ or $\tokend(a)$, for some $a\in\toknames$.
\end{defi}

As an example, the atom $\tokstart(a) \before_{3,7} \tokend(b)$ constrains token $a$ to start at least $3$ and at most $7$ time units before the end of token $b$, while the atom $\tokstart(a) \before_{0,+\infty} \tokstart(b)$ simply constrains  token $a$ to start before token $b$.

\begin{defi}[Synchronization rule]
    A synchronization rule \Rule has one of the following two forms:
\begin{gather*}\label{eq:synchronisation-rules}
  \begin{array}{rcl}
  \centering
    \langle rule \rangle &:=& a_0[x_0=v_0]\implies \langle body \rangle\quad\\
    \langle rule \rangle &:=& \top\implies \langle body \rangle\quad \\
    \langle body \rangle &:=& \E_1 \lor \E_2 \lor \dots \lor \E_k\quad \\
    \E_j&\bydef&\exists a_1[x_1=v_1] a_2[x_2=v_2]\ldots a_n[x_n=v_n] \suchdot \clause_j, \ for \ 1 \leq j \leq k,
  \end{array}  
\end{gather*} 
where $a_i \in \toknames$, $x_i \in \SV$,  $v_i \in V_{x_i}$, and $\clause_j$ is a conjunction of atoms, for $0 \leq i\leq n$.
\end{defi} 
Terms $a_i[x_i=v_i]$ are referred to as \emph{quantifiers}. The term $a_0[x_0=v_0]$ is called the \emph{trigger}. The disjuncts in the body are called \emph{existential statements}. Quantifiers refer to tokens with the corresponding variable and value. The intuitive semantics of a synchronization rule can be given as follows: for every token satisfying the trigger, at least one of the existential statements must be satisfied as well. Each existential statement $\E_j$ requires the existence of tokens that satisfy the quantifiers in its prefix and the clause $\clause_j$. A token that satisfies the trigger of a rule is said to \emph{trigger} that rule. The trigger of a rule can be empty ($\top$). In such a case, the rule is referred to as \emph{triggerless} and it requires the satisfaction of its body without any precondition.

Let $a$ and $b$ be token names. Here are two examples of synchronization rules
(relations $=$ and $\before$ are syntactic sugar for $\before_{0,0}$ and
$\before_{0,+\infty}$, respectively):
\begin{align*}
  a[x_s=\mathsf{Comm}] \implies {}
  & \exists b[x_g=\mathsf{Available}] \suchdot
    \tokstart(b) \before \tokstart(a) \land \tokend(a) \before \tokend(b)\\
  a[x_s=\mathsf{Science}] \implies {}
  & \exists b[x_s=\mathsf{Slewing}] \ c[x_s=\mathsf{Earth}] \ d[x_s=\mathsf{Comm}]
    \suchdot {}\\
  & \tokend(a) = \tokstart(b) \land \tokend(b) = \tokstart(c) \land
    \tokend(c) = \tokstart(d)
\end{align*}
where variables $x_s$ and $x_g$ represent the state of a spacecraft and the visibility of the communication ground station, respectively. The first synchronization rule requires the satellite and the ground station to coordinate their communications so that when the satellite is transmitting, the ground station is available for reception. The second one instructs the system to send data to Earth after every measurement session, interleaved by the required slewing operation. Triggerless rule can be used to state the \emph{goal} of the system. As an example, the following rule ensures that the spacecraft performs some scientific measurement:
\begin{equation*}
  \true \implies\exists a[x_s=\mathsf{Science}]
\end{equation*}
Triggerless rules only require the existence of tokens specified by the
existential statements, being their universal quantification trivial. In fact,
they are syntactic sugar, as it is possible to translate them into triggered
rules, as shown in~\cite{GiganteMOCR20}. From now on, we will not consider them
anymore.

We now formalise the above intuitive account of the semantics of synchronization
rules.
\begin{defi}[Matching functions~\cite{Gigante19}]
  \label{def:matching-function}
  Let $\evseq=\seq{\event_1,\ldots,\event_n}$ be a (possibly open) event
  sequence, $\E\equiv \exists a_1[x_1=v_1]\ldots
  a_k[x_k=v_k]\suchdot\clause$ be one of the existential statements of a
  synchronization rule $\Rule\equiv a_0[x_0=v_0]\implies
  \E_1\lor\ldots\lor\E_m$, and $V$ be a set of terms such that $\tokstart(a)\in
  V$ or $\tokend(a) \in V$ only if $a \in \set{a_0,\ldots, a_k}$. A
  \emph{matching function} $\gamma:V\to[1,\ldots,n]$ maps each term $T\in V$
  to an event $\event_{\gamma(T)}$ in $\evseq$, such that:
  \begin{enumerate}
  \item \label{def:matching-function:nodes}
        for each $T\in V$, with $T=\tokstart(a)$ (resp., $T=\tokend(a)$),
        if $a$ is quantified as $a[x=v]$ in $\E$, then the event
        $\event_{\gamma(T)}=(A_T,\delta_T)$ is such that $\tokstart(x,v)\in A_T$
        (resp.,~$\tokend(x,v)\in A_T$);
      \item \label{def:matching-function:tokens} if both
        $T=\tokstart(a)$ and $T'=\tokend(a)$ belong to $V$ for some token name
        $a\in\toknames$, then $\gamma(T)$ and $\gamma(T')$ identify the
        endpoints of the same token.
  \end{enumerate}
\end{defi}
As a matter of fact, in \cite{Gigante19}, matching functions are defined in terms of \emph{rule graphs}, a data structure that we do not use here. For this reason, we reformulated the original definition in terms of event sequences.

The following definition gives a formal account of the semantics of synchronization rules.
\begin{defi}[Semantics of synchronization rules]
  Let $\Rule\equiv a_0[x_0=v_0]\implies \E_1\lor\ldots\lor\E_m$ and let
  $\evseq=\seq{\event_1,\ldots,\event_n}$ be an event sequence. We say that $\Rule$
  is \emph{satisfied} by $\evseq$ if, \emph{for each} event $\event_i=(A_i,
  \delta_i)$ such that $\tokstart(x_0,v_0)\in A_i$, there exist an existential
  statement $\E_j\equiv \exists a_1[x_1=v_1]\ldots a_k[x_k=v_k]\suchdot\clause$
  and a matching function $\gamma$ such that if $T \le_{[l,u]} T'$ appears in
  $\clause$, then $l\le \gamma(T')-\gamma(T)\le u$, for any pair of terms $T$ and $T'$.
\end{defi}

\emph{Timeline-based planning problems} can be defined as follows.
\begin{defi}[Timeline-based planning problem]
  A \emph{timeline-based planning problem} is a pair $P=(\SV,\S)$, where $\SV$ is
  a set of state variables and $\S$ is a set of synchronization rules over
  $\SV$. An event sequence $\evseq$ over $\SV$ is a solution plan for $P$ if all
  the rules in $\S$ are satisfied by $\evseq$.
\end{defi}

\subsection{Timeline-based games.} We are now ready to introduce the notion of \emph{timeline-based game}, that subsumes that of \emph{timeline-based planning with uncertainty} given in~\cite{CialdeaMayerOU16}.

\begin{defi}[Timeline-based game]
  \label{def:games:game}
  A \emph{timeline-based game} is a tuple $G=(\SV_C,\SV_E,\S,$ $\D)$,
  where $\SV_C$ and $\SV_E$ are the sets of \emph{controlled}
  and \emph{external} state variables, respectively, and $\S$ and $\D$ are the sets of
  \emph{system}  and \emph{domain} synchronization rules, respectively, both 
  involving variables from $\SV_C$ and $\SV_E$.
\end{defi}

A partial plan for $G$ is a partial plan over the variables
$\SV_C\cup\SV_E$. Let $\partialplans_G$ be the set of all possible partial plans
for $G$, simply $\partialplans$ when there is no ambiguity.
Since the empty event sequence $\epsilon$ is closed and $\delta(\epsilon)=0$, the
\emph{empty} partial plan $\epsilon$ is a good starting point for the game.
Players incrementally build onto a partial plan, starting from $\epsilon$, by
playing actions that specify which tokens to start and (or) to end, adding an
event that extends the event sequence, or complementing the existing
last one.  

Formally, we partition the set  of all the available actions $\actions_\SV$ into those that are playable by either of the two players.\fitpar

\begin{defi}[Partition of player actions]
  \label{def:games:actions-partition}
  Let $\SV=\SV_C\cup\SV_E$. The set $\actions_\SV$ of available actions over
   $\SV$ is partitioned into the sets $\actions_C$ of \charlie's actions and  
   $\actions_E$ of \eve's actions, which are defined as follows:\fitpar
  \begin{align}
    \actions_C = {} &
      \underbrace{%
        \set{\tokstart(x,v)\suchthat x\in\SV_C,\; v\in V_x}%
      }_{\text{start tokens on \charlie's timelines}}\;\cup\;
      \underbrace{%
        \set{\tokend(x,v)\suchthat x\in\SV,\; v\in V_x,\; \gamma_x(v)=\ctag}%
      }_{\text{end controllable tokens}}\\
    \actions_E = {} &
      \underbrace{%
        \set{\tokstart(x,v)\suchthat x\in\SV_E,\; v\in V_x}%
      }_{\text{start tokens on \eve's timelines}}\;\cup\;
      \underbrace{%
        \set{\tokend(x,v)\suchthat x\in\SV,\; v\in V_x,\; \gamma_x(v)=\utag}%
      }_{\text{end uncontrollable tokens}}
  \end{align}
\end{defi}

Hence, players can start tokens for owned variables and end them for values that
they control. Let $d=\max(L, U)+1$, where $L$ and $U$ are the maximum lower and
(finite) upper bounds appearing in any rule of $G$. Note that, by
\cref{def:games:actions-partition}, we may have $x \in \SV_E$ and $\gamma_x(v)=
c$ for some $v \in V_x$. This means that Charlie may control the duration of a
variable that belongs to Eve. This situation is symmetrical to the more common one where Eve controls the duration of a variable that belongs to Charlie, that is, uncontrollable tokens. As an example, Charlie may decide to start a task, without
being able to foresee how long it will take. Similarly, the environment may
trigger the start of a process, \eg fixing a plant fault, but Charlie may be
able to control, to some extent, how long it will take to end it, \eg we can
decide to fix it today or tomorrow. 

Actions combine into \emph{moves} starting (resp., ending) multiple tokens simultaneously.

\begin{defi}[Move]
  \label{def:games:moves}
  A \emph{move} $\move_C$ for \charlie is a term of the form $\wait(\delta_C)$
  or $\play(A_C)$, where $1 \le \delta_C \le d$ and $\emptyset\ne
  A_C\subseteq\actions_C$ is  either a set of \emph{starting} actions or 
  a set of \emph{ending} actions. A \emph{move} $\move_E$ for \eve is a term of the form $\play(A_E)$ or
  $\play(\delta_E,A_E)$, where $1 \le \delta_E \le d$ and $A_E\subseteq\actions_E$ is
  either a set of \emph{starting} actions or a set of \emph{ending} actions.
\end{defi}
By \cref{def:games:moves}, moves like $\play(A_C)$ and $\play(\delta_E,A_E)$ can
play either $\tokstart(x,v)$ actions only or $\tokend(x,v)$ actions only. A move
of the former kind is called a \emph{starting} move, while a move of the latter
kind is called an \emph{ending} move. We consider $\wait$ moves as \emph{ending}
moves. Starting and ending moves must alternate during the game.

Let us denote the sets of \charlie's and \eve's moves by $\moves_C$ and $\moves_E$ , respectively. A round of the game is defined as follows.

\begin{defi}[Round]
  \label{def:games:round}
  A \emph{round} $\round$ is a pair
  $(\move_C,\move_E)\in\moves_C\times\moves_E$ of moves such that:
  \begin{enumerate}
  \item \label{def:games:round:alternation}
        $\move_C$ and $\move_E$ are either both \emph{starting} or both
        \emph{ending} moves;
  \item \label{def:games:round:paring}
        either $\round=(\play(A_C),\play(A_E))$, or
        $\round=(\wait(\delta_C),\play(\delta_E,A_E))$, with
        $\delta_E\le\delta_C$;
  \end{enumerate}
\end{defi}

A \emph{starting} (resp., \emph{ending}) round is one made of starting (resp.,
ending) moves. Since \charlie cannot play empty moves and $\wait$ moves are
ending moves, each round is unambiguously either a starting or an ending round.
Moreover, since $\play(\delta_E,A_E)$ moves are always paired with
$\wait(\delta_C)$ ones, which are ending moves, then $\play(\delta_E,A_E)$ moves
are necessarily ending moves (item \ref{def:games:round:alternation} of
Definition \ref{def:games:round}).

 We can now specify how to apply a round to the current partial plan to obtain the new one. The game always starts with a single starting round.

\begin{defi}[Outcome of rounds]
  \label{def:games:round-outcome}
  Let $\evseq=\seq{\event_1,\ldots,\event_n}$ be an event sequence, with
  $\event_n=(A_n,\delta_n)$ ($\event_n=(\emptyset,0)$ if $\evseq=\epsilon$). Let
  $\round=(\move_C,\move_E)$ be a round, $A_E$ and $A_C$ be the sets of actions
  of the two moves ($A_C$ is empty if $\move_C$ is a $\wait$ move), and
  $\delta_E$ and $\delta_C$ be the time increments of the moves. We define
  $\delta_C=1$ (resp., $\delta_E=1$) for $\play(A_C)$  
  (resp., $\play(A_E)$).
  
  The \emph{outcome} of the application of $\round$ on $\evseq$ is the event sequence
  $\round(\evseq)$ defined as follows:
  \begin{enumerate}
  \item \label{def:games:round-outcome:starting}
        if $\round$ is a starting round, then $\round(\evseq)=\evseq_{< n}\event_n'$,
        where $\event_n'\nobreak=\nobreak(A_n\cup A_C\cup A_E,\delta_n)$;
  \item \label{def:games:round-outcome:ending}
        if $\round$ is an ending round, then $\round(\evseq)=\evseq\event'$, where
        $\event'=(A_C\cup A_E,\delta_E)$;
  \end{enumerate}
  
  We say that $\round$ is \emph{applicable} to $\evseq$ if:
  \begin{enumerate}[label=\alph*)]
  \item \label{def:games:round-outcome:integrity}
        $\round(\evseq)$ %is a well defined event sequence by 
        complies with \cref{def:event-sequence};
  \item \label{def:games:round-outcome:alternation}
        $\round$ is an ending round if and only if $\evseq$ is open for all variables that appear in the moves.
  \end{enumerate}
\end{defi}

A single move by either player is applicable to $\evseq$ if there is
a move for the other player such that the resulting round is applicable to
$\evseq$.
The game starts from the empty partial plan $\epsilon$, and players play in
turn, composing a round from the move of each one, which is applied to the
current partial plan to obtain the new one.
We can now define the notion of \emph{strategy} for each player and that of \emph{winning strategy} for \charlie.

\begin{defi}[Strategy]
  \label{def:games:strategies}
  A \emph{strategy for Charlie} is a function
  $\strategy_C:\partialplans\to\moves_C$ that maps any given partial plan
  $\evseq$ into a move $\move_C$ applicable to $\evseq$.
  A \emph{strategy for Eve} is a function
  $\strategy_E:\partialplans\times\moves_C\to\moves_E$ that maps a partial
  plan $\evseq$ and a move $\move_C\in\moves_C$ applicable to $\evseq$ into a move
  $\move_E$ such that the round $\round=(\move_C,\move_E)$ is applicable to
  $\evseq$.
\end{defi}

A sequence $\rounds=\seq{\round_0,\ldots,\round_n}$ of rounds is called a
\emph{play} of the game. A play is said to be \emph{played according to} some
strategy $\strategy_C$ for \charlie, if, starting from the initial partial plan
$\evseq_0=\epsilon$, it holds that $\round_i=(\strategy_C(\Pi_{i-1}),
\move_E^i)$, for some $\move_E^i$, for all $0<i\le n$, and to be played
according to some strategy $\strategy_E$ for \eve if $\round_i=(\move_C^i,
\strategy_E(\Pi_{i-1},\move_C^i))$, for all $0<i\le n$. It can be easily seen that for
any pair of strategies $(\strategy_C,\strategy_E)$ and any $n\ge0$, there is a
unique play $\rounds_n(\strategy_C,\strategy_E)$ of length $n$ played according to both $\strategy_C$ and $\strategy_E$.

Then, we say that a partial plan $\evseq$ and the play $\rounds$ such that
$\evseq=\rounds(\epsilon)$ are \emph{admissible}, if the partial plan satisfies
the domain rules, and that they are \emph{successful} if the partial plan satisfies the
system rules.

\begin{defi}[Admissible strategy for \eve]
  \label{def:games:admissible-strategy}
  A strategy $\strategy_E$ for \eve is \emph{admissible} if for each strategy
  $\strategy_C$ for \charlie, there is $k\ge 0$ such that the play
  $\rounds_k(\strategy_C,\strategy_E)$ is admissible.
\end{defi}

\charlie wins if, \emph{assuming} that domain rules are respected, he manages to
satisfy the system rules no matter how \eve plays.

\begin{defi}[Winning strategy for \charlie]
  \label{def:games:winning-strategy}
  Let $\strategy_C$ be a strategy for \charlie. We say that $\strategy_C$ is a
  \emph{winning strategy} for \charlie if for any \emph{admissible} strategy
  $\strategy_E$ for \eve, there exists $n\ge0$ such that the play
  $\rounds_n(\strategy_C,\strategy_E)$ is successful.
\end{defi}

We say that \charlie \emph{wins} the game $G$ if he has a winning strategy,
while \eve \emph{wins} the game if a winning strategy for \charlie does not
exist.

\subsection{Synthesis}
The synthesis problem is the problem of devising an implementation that satisfies a formal specification of an input-output relation~\cite{PnueliRosner89}. Such an implementation may be a transducer, a Mealy machine, a Moore machine, a circuit, or the like. In the following, we give a short account of the roles of games and strategies in game-based synthesis.

\begin{defi}[Game Graph] A finite game graph $G$ is a triple $\left(Q, Q_C, E\right)$, where $Q$ is a finite set of nodes, $Q_C \subseteq Q$ is the subset of \charlie's nodes, and $E \subseteq Q \times Q$ is a transition relation. The relation $E$ must satisfy the condition: $\forall q \exists q' : \left(q,q'\right) \in E$ (totality). \end{defi} 

A \emph{play} on a game graph $G$ starting from the initial state $q_0$ is an infinite sequence $p = q_0 q_1 q_2\ldots$, where $(q_i, q_{i+1}) \in E$, for all  $i \ge 0$. A game is  a pair $(G, \mathcal{W})$, where $G$ is a game graph and $\mathcal{W}$ 
%\subseteq 2^Q$ 
is the winning condition of the game. In the general case, $\mathcal{W}$ consists of the set of plays won by  \charlie.

Here, we focus on reachability winning conditions, which are expressed as $\mathcal{W} \bydef \{ R \subseteq Q \mid R \cap F \neq \emptyset \}$, for a given set $F \subseteq Q$. A play $p$ is said to satisfy 
%the reachability winning condition 
$\mathcal{W}$ if the set of states visited by $p$, denoted by $occ(p) = \{q \in Q \mid \exists i \mathrel{.} p(i) = q\}$, intersects $\mathcal{W}$, that is, \charlie wins the play $p$ if $p$ visits at least one state in $F$.

\begin{defi}[Reachability game] A reachability game is a pair $(G, \mathcal{W})$, where $G = (Q, Q_C, E)$ is a game graph and $\mathcal{W}$ is a reachability winning condition. \end{defi}

A strategy for \charlie is a function $f : Q^*\cdot Q_C \rightarrow Q$. A play
$p$ adheres to strategy $f$ if, for each $q_i \in Q_C$, $q_{i+1} = f(q_0 \ldots
q_i)$. Given an initial state $q$, a strategy for \charlie is a winning strategy
if \charlie wins any play from $q$ that follows the strategy $f$. The same holds
for \eve. \charlie (resp., \eve) wins if a winning strategy exists from $q$. 

Given a game $(G, \mathcal{W})$, with $G = (Q, Q_C, E)$,
the winning region of \charlie is defined as $W_C \bydef \{ q \in Q \, \vert \,
\charlie \text{ wins from q } \}$. The winning region $W_E$ for \eve
is defined in an analogous way.
The two sets are clearly disjoint ($W_C \cap W_E = \emptyset$). The game is
said to be \emph{determined} if $W_C \cup W_E = Q$. It is well known that
reachability games are determined~\cite{Thomas2008}. 

\smallskip

The next step is to build a Controller starting from a winning strategy $f$ such that the specification is met. We use Moore machines as \charlie plays first. 

\begin{defi}[Moore machine]
    \label{def:moore-machine}
    A Moore machine is a tuple $M = (Q, \Sigma, \Gamma, q_0, \delta, \tau)$, where $Q$ is a finite set of states, $\Sigma$ is a finite input alphabet, $\Gamma$ is a finite output alphabet, $q_0 \in Q$ is the initial state, $\delta : Q \times \Sigma \rightarrow Q$ is the transition function, and $\tau: Q \rightarrow \Gamma$ it the output function.
\end{defi}

By suitably tying $\delta$ and $\tau$ to $f$, one can effectively implement $f$. We refer the reader to \cref{def:controller-implementation} for the details on how we do it.

\subsection{Difference Bound Matrices}
\emph{Difference bound matrices} (DBMs) were introduced by
Dill~\cite{dill1989timing} as a pragmatic representation of constraints $(x - y
\leq c)$. Later on, Péron et al.~\cite{peron2007abstract} suitably expanded
the formalism. The following short account of the formalism is basically borrowed from
the latter work,

Let $\mathit{Var} = \{v_0, v_1, \ldots v_n\}$ be a finite set of variables, $\bar{V} = \mathbb{Z} \cup
\{+\infty\}$ be a set of values that variables and constants can take, and $C$ be  a set of constraints of the form $v_i - v_j \leq c$, where $v_i,v_j \in \mathit{Var}$ and $c \in \bar{V}$. 
The DBM that represents $C$ is an $(n+1)\times(n+1)$ matrix defined as follows: 
\begin{equation*}
    M_{ij} = \text{inf}\{c \, \mid \, (v_i - v_j \leq c) \in C\},
\end{equation*}
where $\text{inf}(\emptyset) = +\infty$. 

$M_{ij}$ equals the tightest value of
$c$ if there is some constraint $(v_i - v_j \leq c)$ in $C$; otherwise, it is
$+\infty$. The variable $v_0 \in \mathit{Var}$ is always valued to $0$, and it is
used to express bounds on variables, that is, $v_i \leq c$ is written as
$v_i - v_0 \leq c$.  In \cref{sec:automaton}, we use DBMs to conveniently
represent atoms (see \cref{def:atom}).

%%% Local Variables:
%%% TeX-master: "../lmcs-gandalf22.tex"
%%% End:
%!TeX root = ../lmcs-gandalf22.tex

%TODO controllare le definizioni
\section{A deterministic automaton for timeline-based planning}
\label{sec:automaton}

In this section, we define an encoding of timeline-based planning problems into \emph{deterministic} finite state automata (DFA).
Given a timeline-based planning problem, the corresponding automaton recognizes all and only those \emph{event sequences} that represent solution plans for the problem. In the next section, we will use such an automaton as the game arena for a timeline-based game.

\subsection{Plans as words}
Let $P=(\SV, S)$ be a timeline-based planning problem and, as already stated in the previous section, let $d = \max(L, U)+1$, where $L$ and $U$ are the maximum lower and (finite) upper bounds appearing in any rule of $P$. We restrict our attention to event sequences where the distance between two consecutive events is at most $d$. Such a restriction guarantees us the finiteness of the considered alphabet, and it does not cause any loss in generality, as proved by Lemma 4.8 of~\cite{Gigante19}. Moreover, it agrees with the notion of move of a timeline-based game (see \cref{def:games:moves}).

We define the symbols of the alphabet $\Sigma$ as \emph{events} of the form $\event = \pair{A,
\delta}$, where $A \subseteq \actions_\SV$ and $1\le\delta\le d$. Formally,
$\Sigma=2^{\actions_\SV}\times\ar{d}$, where $\ar{d}=\set{1,\ldots,d}$. Note that the size of $\Sigma$ is exponential in the size of the problem. Moreover,
we define $\window(P)$ as the sum of all the
coefficients appearing as upper bounds in the rules of $P$. This value represents the
maximum amount of time a rule can ``count'' far away from the
occurrence of the quantified
tokens. Consider, for instance, the following rule: 
\begin{align}
  \label{eq:example-3var-synch}
  a_0[x_0=v_0]\to{} &\exists a_1[x_1=v_1] a_2[x_2=v_2] a_3[x_3=v_3] \suchdot \\
  &\tokstart(a_1)\before_{4,14}\tokend(a_0)
  \land \tokend(a_0)\before_{0,+\infty}\tokend(a_2) \land \tokstart(a_2)\before_{0,3}\tokend(a_3)\tag*{}
\end{align}

In this case, assuming the above rule  to be the only one in the problem,
$\window(P)$ would be $3 + 14=17$. Thus, the rule can account for what happens
at most $17$ time points from the occurrence of its quantified tokens. For
instance, if the token $a_1$ appears at a specific distance from $a_0$, it has
to be within less than $17$ time points, and any modification of the plan that
alters this distance can break the rule's satisfaction. However, what occurs
further away from $a_0$ only affects the fulfillment of the rule
\emph{qualitatively}. Suppose that the tokens $a_2$ and $a_3$ are, together, at
$100$ time points from $a_0$. Changing this distance while maintaining the
qualitative order between tokens does not break the rule's satisfaction. For
$\window(P)$'s properties refer to \cite{Gigante19}.

\subsection{Matching structures}
A key insight underlying the construction we are going to outline is that every atomic temporal
relation $T \before_{l,u} T^\prime$ can be rewritten as the conjunction of two upper 
bound constraints $T^\prime - T \leq u$ and $T - T^\prime \leq -l$, where we
represent a lower bound constraint $T^\prime - T \geq l$ as an upper bound one.
This allows us to rewrite the clause \clause of an existential statement \E as a constraint system
$\nu(\clause)$ with constraints of the form $T - T^\prime \leq n$, for $n \in
\Z \cup \{+\infty\}$.

The  constraint system $\nu(\clause)$ can be represented by a difference bound matrix $D$ indexed by terms, where the entry $D[T, T']$ gives the upper bound $n$ on $T - T^\prime$.
In building $D$, we ensure the right duration of tokens
by augmenting the system with constraints
of the kind $\tokstart(a_i)-\tokend(a_i)\leq-\dmin^{x_i=v_i}$ and
$\tokend(a_i)-\tokstart(a_i)\leq\dmax^{x_i=v_i}$, for any quantified token
$a_i[x_i=v_i]$ of $\E$.
As an example, the constraint system and the DBM for the above rule are the ones in \cref{fig:rule-constraint-system,fig:dbm}, respectively.

\begin{figure}[h]
    \centering
    \begin{align*}
    \begin{cases}
        \tokend(a_0) - \tokstart(a_1) &\leq 14 \\
        \tokstart(a_1) - \tokend(a_0) &\leq -4 \\
        \tokend(a_0) - \tokend(a_2) &\leq 0 \\
        \tokend(a_3) - \tokstart(a_2) &\leq 3 \\
        \tokstart(a_2) - \tokend(a_3) &\leq 0
    \end{cases}
\end{align*}
    \caption{The constraint system of \cref{eq:example-3var-synch}.}
    \label{fig:rule-constraint-system}
\end{figure}

\begin{figure}
  \begin{equation*}
    \begin{array}{rcccccccc}\toprule  % pushed the table together, so it fits inside the margin
      &\!\tokstart(a_0) & \!\tokend(a_0) & \!\tokstart(a_1) & \!\tokend(a_1) &
      \!\tokstart(a_2) & \!\tokend(a_2) & \!\tokstart(a_3) & \!\tokend(a_3) \\\midrule
      \tokstart(a_0) \\
      \tokend(a_0) & & & 14 & & & 0 \\
      \tokstart(a_1) & & -4 \\
      \tokend(a_1) \\
      \tokstart(a_2)  & & & & & & & & 0 \\
      \tokend(a_2) & & & \\
      \tokstart(a_3) \\
      \tokend(a_3)  & & & & & 3 & & & \\\bottomrule
    \end{array}
  \end{equation*}
  \caption{DBM of \cref{eq:example-3var-synch}. Missing entries are intended to be $+\infty$.}
  \label{fig:dbm}
\end{figure}

On top of DBMs, we define the concept of \emph{matching structure}, a data
structure that allows us to monitor and update the fulfillment of atomic
temporal relations among terms throughout the execution of the plan. More
precisely, it allows us to manipulate and reason about existential statements of
which only a portion of the requests has been satisfied by the word read so far,
while the rest is potentially satisfiable in the future.

\begin{defi}[Matching Structure]
  \label{def:matching-structure}
  Let $\E\equiv \exists a_1[x_1 = v_1] \dots a_m[x_m = v_m] \,.\, \clause$ be
  an existential statement of a synchronization rule $\Rule \equiv a_0[x_0 =
  v_0] \rightarrow \E_1 \lor \dots \lor \E_k$ over the set of state variables
  \SV. The \emph{matching structure} for $\E$ is a tuple $\M_{\E} = (V, D, M, t)$,
  where:
  \begin{itemize}
  \item $V$ is the set of terms $\tokstart(a)$ and $\tokend(a)$, for
    $a\in\set{a_0, \dots, a_m}$;
  \item %$D \in \Z_{+\infty}^{|V|^2}$ is a DBM indexed by terms of $V$, 
    $D$ is a DBM of size $|V| \times |V|$, indexed by terms of $V$, whose
    entries take value over $\Z \cup \{+\infty\}$, where
  \begin{align*}
      \begin{cases}
          D[T,T']=n &\quad if  \; T-T'\le n \ \in \nu(\clause), \\
          D[T,T']=0 &\quad if \; T=T', \\
          D[T,T']=+\infty &\quad otherwise;
      \end{cases}
  \end{align*}    
  \item $M \subseteq V$ and $0\le t \le \window(P)$.
  \end{itemize}
\end{defi}

The set $M$ contains the set of terms from $V$ correctly seen in the sequence so
far. We say these terms have been \emph{matched} by the matching structure. We
use $\overline{M} = V \setminus M$ to refer to terms yet to be matched. We say a
matching structure $\M$ to be \emph{closed} if $M = V$, \emph{initial} if $M =
\emptyset$, and \emph{active} if $\tokstart(a_0) \in M$ and it is not closed.
The component $t$ represents the time elapsed since matching $\tokstart(a_0)$.
As time progresses, we update a matching structure as follows.

In the DBMs of a matching structure, the bounds between any pair of terms $T$
and $T'$, with one in $M$ while the other not, are tightened by the elapsing of
time. When $T\in M$ and $T'\in\overline{M}$, $D[T,T']$ is a lower bound loosened
by adding the elapsed time $\delta$. When $T\in\overline{M}$ and $T'\in M$,
$D[T,T']$ is an upper bound tightened by subtracting $\delta$. Consider the DBM
in \Cref{fig:dbm} and the pair of terms $\tokstart(a_1)$ and $\tokend(a_0)$. We
have $D[\tokstart(a_1),\tokend(a_0)]=-4$, implying that
$\tokstart(a_1)-\tokend(a_0)\le -4$ must hold. Suppose that $\tokstart(a_1)\in
M$ (it has been matched), and that $\tokend(a_0)\in\overline{M}$ (it needs to be
matched). Now, in a time step, the entry in the DBM is incremented and updated
to $-4+1=-3$ reflecting the fact that we now have $3$ time steps left to match
$\tokend(a_0)$. A similar analysis leads us to the conclusion that the entry
$D[\tokend(a_0),\tokstart(a_1)]=14$ has to be decremented by $1$ and updated to
$14-1=13$. This intuition is formalized as follows.

\begin{defi}[Time shifting]
  \label{def:time-shift}
  Let $\delta > 0$ be a positive amount of time, and let $\M = (V, D, M, t)$ be a
  matching structure. The result of shifting $\M$ by $\delta$ time units,
  written $\M + \delta$, is a matching structure $\M^\prime = (V, D^\prime, M,
  t')$, where:
  \begin{itemize}
  \item for all $T, T' \in V$:
    \begin{equation*}
      D^\prime[T,T'] =
      \begin{cases}
        D[T,T'] + \delta &\text{if } T \in M \text{ and } T' \in
        \overline{M}\\% D[Tj] è lower bound
        D[T,T'] - \delta &\text{if } T \in \overline{M} \text{ and } T' \in
        M\\% D[ij] è upper bound
        D[T,T'] &\text{otherwise}
      \end{cases}
    \end{equation*}
  \item and
    \[
      t' =
      \begin{cases}
        t+\delta & \text{if } \M \text{ is \emph{active}}\\
        t & \text{otherwise}
      \end{cases}
    \]
  \end{itemize}
\end{defi}
\Cref{def:time-shift} specifies how to update the entries of $D$ and how to update $t$ to the trigger occurrence of an active matching structure.

\begin{defi}[Matching]
  \label{def:matching}
  Let $\M = (V, D, M, t)$ be a matching structure and $I \subseteq \overline{M}$
  a set of matched terms. A matching structure $\M^\prime = (V, D, M^\prime, t)$
  is the result of matching the set $I$, written $\M \cup I$, with $M^\prime = M
  \cup I$.
\end{defi}

To correctly match an existential statement while reading an event sequence, a matching structure is updated only as long as one witnesses no violation of temporal constraints. As such, we deem an event as \emph{admissible} or not.

\begin{defi}[Admissible Event]\label{def:admissible-event}
  An event $\event = (A, \delta)$ is \emph{admissible} for a matching structure
  $\M_{\E} = (V, D, M, t)$ if and only if, for every $T \in M$
  and $T' \in \overline{M}$, $\delta \leq D[T',T]$, \ie the elapsing of $\delta$
  time units does not exceed the upper bound of some term $T'$ not yet
matched by $\M_{\E}$.
\end{defi}

Each admissible event $\event$ that is read can be matched with a subset of terms from the matching structure. However, there can be multiple ways to match events and terms. To make this choice explicit, we introduce the following definition.

\begin{defi}[$I$-match Event]\label{def:match-event}% I for indexes
  Let $\M_{\E} = (V, D, M, t)$ be a matching structure and  $I \subseteq
  \overline{M}$. An $I$\emph{-match event} is an admissible event $\event = (A,
  \delta)$ for $\M_{\E}$ such that:
  \begin{enumerate}
  \item for all token names $a \in \mathsf{N}$ quantified as $a[x = v]$ in $\E$
    we have that:\label{def:match-event:good-match}
    \begin{enumerate}
    \item if $\tokstart(a) \in I$, then $\tokstart(x, v) \in A$;
      \label{def:match-event:good-match:start}
    \item $\tokend(a) \in I$ if and only if $\tokstart(a) \in M$ and $\tokend(x,v) \in
      A$;\label{def:match-event:good-match:end}
    \end{enumerate}
  \item and for all $T \in I$ it holds that:\label{def:match-event:relations}
    \begin{enumerate}
    \item \label{def:match-event:preceding-terms} for every other term $T' \in
      V$, if $D[T',T] \leq 0$, then $T' \in M \cup I$;
    \item \label{def:match-event:lower-bounds} for all $T' \in M$, $\delta \geq
      -D[T',T]$, \ie all the lower bounds on $T$ are satisfied;
    \item \label{def:match-event:zero-no-bounds} for each other term $T' \in I$,
      either $D[T',T] = 0$, $D[T,T'] = 0$, or $D[T',T] = D[T, T'] = +\infty$.
    \end{enumerate}
  \end{enumerate}
\end{defi}

We consider an event $\event$ an $I$-match event if its actions correspond to the terms in $I$. The definition in \Cref{def:match-event:good-match} ensures the correct matching of each term to an action it represents and that the endpoints of a quantified token precisely identify the endpoints of a token in the event sequence. Meanwhile, \Cref{def:match-event:relations} guarantees that matching the terms in $I$ does not violate any atomic temporal relation. In addition, \Cref{def:match-event:preceding-terms} deals with the qualitative aspect of a ``happens before'' relation, while \Cref{def:match-event:lower-bounds,def:match-event:zero-no-bounds} address the quantitative aspects of the lower bounds of these relations. It is worth noting that an $\emptyset$-event is also considered admissible.

Let $\matchstructs_P$ denote the set of all matching structures for a planning
problem $P$, and let $\I$ be the set of all possible terms built from token
names in $\toknames$. To describe the evolution of a matching structure, we
define a quaternary relation
$S\subseteq\matchstructs_P\times\Sigma\times\I\times\matchstructs_P$ as
$(\M,\event,I,\M')\in{S}$, for an event $\event = (A, \delta)$, if and only if
$\event$ is an $I$-match event for $\M$, and $\M'=(\M+\delta)\cup I$. We also
write $\M \stepm \M'$ in place of $(\M,\event,I,\M')\in{S}$.
Note that, from \Cref{def:match-event}, a single event can represent multiple
$I$-match events for a matching structure. Therefore, given a matching structure
$\M$ and an event $\event$, automaton states will collect all the matching
structures $\M'$ resulting from the relation $S$, for some set of terms $I$.
Given a set of matching structures $\Upsilon$, this notion is best described by
the function $\step_\event(\Upsilon)=\set{\M' \mid (\M,\event,I,\M')\in S,
  \text{ for some } \M\in\Upsilon \text{ and } I \in \I}$. Furthermore, we
define $\Upsilon^\Rule_t\subseteq\Upsilon$ as the set of all the \emph{active}
matching structures $\M\in\Upsilon$ with timestamp $t$, associated with any
existential statement of $\Rule$. Matching structures in $\Upsilon^\Rule_t$
contribute to fulfilling the same triggering event of $\Rule$, regardless of their
existential statement. We also define $\Upsilon_\bot\subseteq\Upsilon$ as the
set of \emph{non-active} matching structures of $\Upsilon$. Lastly, we say that
$\Upsilon$ is \emph{closed} if there exists $\M\in\Upsilon$ such that $\M$ is
\emph{closed}.

\begin{figure}
  \centering
  \begin{tikzpicture}[xscale=1.6]
    \footnotesize
    \begin{timelines}[flexibility/.style={color=primary!50}]
      \begin{timeline}[var={x_0}]

        \token[length=160] {$x_0 = v_0$};

        \token[length=40] {$x_0 = v^\prime_0$};

      \end{timeline}

      \begin{timeline}[var={x_1}]
        \token[length=60] {$x_1 = v^\prime_1$};

        \token[length=80] {$x_1 = v_1$};

        \token[length=60] {$x_1 = v^{''}_1$};
      \end{timeline}

      \begin{timeline}[var={x_2}]
        \token[length=50] {$x_2 = v'_2$};

        \token[length=110] {$x_2 = v_2$};

        \token[length=40] {$x_2 = v^{''}_2$};
      \end{timeline}

      \begin{timeline}[var={x_3}]
          \token[length=80]{$x_3 = v_3$};

          \token[length=120]{$x_3 = v'_3$};
      \end{timeline}
    \end{timelines}

  \end{tikzpicture}
  \caption{%
    Example of timelines for variables $x_0, \, x_1, \, x_2, \, x_3$.%
  }%
  \label{fig:timelines-0-1-2-3}
\end{figure}

We conclude this section by providing an example of updating a matching structure $\M=(V,D,M,t)$ for the rule discussed at the beginning of the section. Consider the set of timelines in \cref{fig:timelines-0-1-2-3}. Before matching any term $\M$ is initial with $M = \emptyset$, $t = 0$, $D$ as the DBM in \cref{fig:dbm}, and $V$ as the set of term $\tokstart(a)$ and $\tokend(a)$ for $a \in \{a_0, a_1, a_2, a_3\}$.
We begin by matching the terms $\tokstart(a_0)$ and $\tokstart(a_3)$ from the event $\event = (\{\tokstart(x_0, v_0),\tokstart(x_3, v_3)\}, 0)$ (we do not consider $\tokstart(x_1,v'_1)$ and $\tokstart(x_2,v'_2)$ since they are not in $V$). Such event is an $I$-match event for $I = \{ \tokstart(a_0), \tokstart(a_3)\}$: it is an admissible event (\cref{def:admissible-event}), \cref{def:match-event:good-match:start} holds, for both 
$\tokstart(a_0)$ and $\tokstart(a_3)$, there are no terms that should appear before them (\cref{def:match-event:preceding-terms}), there are no related lower bounds (\cref{def:match-event:lower-bounds}), and $D[\tokstart(a_0), \tokstart(a_3)] = D[\tokstart(a_3), \tokstart(a_0)] = +\infty$ (\cref{def:match-event:zero-no-bounds}). Hence, we update $M = M \cup I = \{\tokstart(a_0), \tokstart(a_3)\}$ and $t = t + \delta = 0$; now $\M$ is active. The next term to consider is $\tokstart(a_2)$, which occurs after $\delta = 5$ time steps.

First, we ensure that the event $\event = (\tokstart(x_2, v_2), 5)$ is admissible. We show that by examining the DBM in \cref{fig:dbm}, we see that the elapsing of time $\delta$ does not exceed any upper bound related to terms $T \in M$ and $T' \in \overline{M}$. Next, the set $I$ in the current state appears as $I = \{ \tokstart(a_2) \}$. Notice that we are in the case of \cref{def:match-event:good-match:start}, and \cref{def:match-event:relations} holds because no constraint involves the term $\tokstart(a_2)$ (\cref{def:match-event:preceding-terms}), no lower bounds are related to $\tokstart(a_2)$ (\cref{def:match-event:lower-bounds}), and $\tokstart(a_2)$ is the only term in $I$ (\cref{def:match-event:zero-no-bounds}). Therefore, from \cref{def:matching,def:time-shift}, we update $\M$ as follows: $\M = (\M + \delta) \cup I$. Each entry of the DBM will remain unchanged since the third update case of \cref{def:time-shift} applies, $M = M \cup I = \{\tokstart(a_0), \tokstart(a_3), \tokstart(a_2)\}$, and $t = t + \delta = 5$.

Similarly, for the next event is $\event = (\tokstart(x_1, v_1), 1)$, we check if such an event is admissible, and indeed it is since the upper bound  $D[\tokend(a_0), \tokstart(a_1)] = 9 \ge \delta$. It is also an $I$-match event for $I = \{\tokstart(a_1)\}$, since it respects \cref{def:match-event:good-match:start} and all the relations in \cref{def:match-event:relations}; thus we update $\M$. We decrement $D[\tokend(a_3), \tokstart(a_2)]$ and increment $D[\tokstart(a_2), \tokend(a_3)]$ by 1 (see \cref{def:time-shift}), update $M$ like follows $M = \{\tokstart(a_0), \tokstart(a_3), \tokstart(a_2), \tokstart(a_1)\}$, and $t = t + \delta = 5 + 1 = 6$.

The next event is $\event = (\tokend(x_3, v_3), 3)$ after 2 time steps. Note that it is an admissible event and also an $I$-match event for $I = \{\tokend(a_3)\}$. In this case, we emphasize that \cref{def:match-event:good-match:end,def:match-event:relations} are respected. We update the DBM as follows: $D[\tokend(a_0), \tokstart(a_1)] = 14 - 2 = 12$, $D[\tokstart(a_1), \tokend(a_0)] = -4 + 2 = -2$, $D[\tokend(a_3), \tokstart(a_2)] = 2 - 2 = 0$, $D[\tokstart(a_2), \tokend(a_3)] = 1 + 2 = 3$. Then, we update $M = M \cup I = \{\tokstart(a_0), \tokstart(a_3),\linebreak \tokstart(a_2), \tokstart(a_1), \tokend(a_3)\}$ and $t = t + \delta = 6 + 2 = 8$. Notice that if we did not match $\tokend(a_3)$ now, at the next time step, the timeline would have violated the rule above because the upper bound $D[\tokend(a_3),\tokstart(a_2)] = 0$.

The subsequent event is $\event = (\tokend(x_1, v_1), \tokstart(x_1 = v'_1), 6)$ for which $I = \tokend(a_1)$. Since there is no constraint involving $\tokend(a_1)$, this event is admissible and an $I$-match event. The DBM is shifted by 6 time steps, and $M = \{\tokstart(a_0), \tokstart(a_3), \tokstart(a_2), \tokstart(a_1), \tokend(a_1)\}$.

The last event $\event = (\{\tokstart(x_0, v'_0), \tokstart(x_2, v^{''}_2), \tokend(x_0, v_0), \tokend(x_2, v_2)\}, 2)$ is admissible and an $I$-match for $I = \{\tokend(a_0), \tokend(a_2)\}$, note that there is not an upper bound between $\tokend(a_0)$ and $\tokend(a_2)$ and that \cref{def:match-event:good-match:end,def:match-event:relations} of the definition of $I$-match event are respected.

\subsection{Building the automaton}\label{sec:automata-construction}
We can now define the automaton. First, given an existential statement $\E$, let $\mathbb{E}_\E$ be the set of all existential statements in the same rule of $\E$. Next, let $\mathbb{F}_P$ be the set of functions that map each existential statement of $P$ to a set of existential statements and let $\mathbb{D}_P$ be the set of functions that map each existential statement to a set of matching structures $\Upsilon$.
%TODO rivedere questa definizione o magari definire l'automa TV_P
An automaton $\TV_P$ that checks the transition functions of the variables is easy to define. Then, given a timeline-based planning problem $P=(\SV, S)$, we can characterize the corresponding automaton as $A_P=\TV_P\cap\S_P$. Here, $\S_P$ checks the fulfillment of the synchronization rules, and we define it as $\S_P = (Q, \Sigma, q_0, F, \tau)$ where
\begin{enumerate} %todo considerare se cambiarlo in itemize
\item $Q = 2^{\matchstructs_P} \times \mathbb{D}_P \times \F_P \cup \set{\bot}$ is the
  finite set of states, \ie states are tuples of the form $\langle \Upsilon,
  \Delta, \Phi \rangle\in2^{\matchstructs_P} \times \mathbb{D}_P \times \F_P$, plus a
  sink state $\bot$;
\item $\Sigma$ is the input alphabet defined above;
  % in $\mathcal{A}_P \times [K]$,
  % where $K = \operatorname{window}(P)$;
\item the initial state $q_0 = \langle \Upsilon_0, \Delta_0, \Phi_0 \rangle$ is
  such that $\Upsilon_0$ is the set of initial matching structures of the
  existential statements of $P$ and, for all existential statements $\E$ of $P$,
  we have $\Delta_0(\E) = \emptyset$ and $\Phi_0(\E) = \mathbb{E}_\E$;
  % $\Upsilon = \Set{\M_{\E} | \E \in S \text{ and} \M_{\E}\text{ is
  % \emph{initial}}}$ and, for all
  % $\E \in S$, $\Delta(\E) = \emptyset$ and $\Phi(\E)
  % = \emptyset$;
\item $F \subseteq Q$ is the set of final states defined as:
  \[
    F = \Set{ \langle \Upsilon, \Delta, \Phi \rangle \in Q |
      \begin{gathered}
        \M \text{ is not \emph{active} for all } \M \in
        \Upsilon\\
        \text{and }\Delta(\E)=\emptyset\text{ for all }\E\text{ of } P
      \end{gathered}}
  \]
\item $\tau : Q \times \Sigma \rightarrow Q$ is the transition function that
  given a state $q=\langle \Upsilon, \Delta, \Phi \rangle$ and a symbol $\event
  = (A, \delta)$ computes the new state $\tau(q,\event)$. Let
  $\step^\E_\event(\Upsilon^\Rule_t)=\set{\M_\E \mid
  \M_\E\in\step_\event(\Upsilon^\Rule_t)}$. Moreover, let $\Psi^\Rule_t = \set{ \E |
  \M_{\E} \in \step_\event(\Upsilon^\Rule_t)}$. Then, the updated components of
  the state are based on what follows, where $W = \window(P)$:
  \begin{align*}
    \Upsilon' &= \step_\event(\Upsilon_\bot) \cup \bigcup \Set{
      \step_\event(\Upsilon^\Rule_t) |
      \text{$t\le W-\delta$ and 
      $\step_\event(\Upsilon^\Rule_t)$ is not \emph{closed}}} \\
    \Delta'(\E) &=\begin{cases}
        \step^\E_\event(\Upsilon^\Rule_t) & \text{where $t$ is the minimum such that $t> W-\delta$ and $\step^\E_\event(\Upsilon^\Rule_t)\ne\emptyset$} \\
        \step_\event(\Delta(\E)) & \text{if such $t$ does not exist}
      \end{cases}\\
    \Phi'(\E) &= \begin{cases}
      \mathbb{E}_\E\quad\text{if $\E\in\Psi(\E')$ for some $\E'$ such that 
      $\Delta'(\E')$ is \emph{closed}}  \\
      \Phi(\E) \setminus 
        \set{
          \E'\mid \exists t> W-\delta \suchdot \E'\in\Psi^\Rule_t 
          \land \E\not\in\Psi^\Rule_t
        } \quad \text{otherwise}
    \end{cases}
  \end{align*}

  Let $\Delta''(\E)=\Delta'(\E)$ unless there is an $\E'$ with $\E\in\Phi'(\E')$
  such that $\Delta'(\E')$ is \emph{closed}, in which case
  $\Delta''(\E)=\emptyset$. Then, $\tau(q,\event)=\seq{\Upsilon', \Delta'',
  \Phi'}$ if the following holds:
  \begin{enumerate}
  \item for every $\Upsilon^\Rule_t$, $\step_\event(\Upsilon^\Rule_t) \neq
    \emptyset$, and \label{dfa:delta:no-failed-step}
  \item for every synchronization rule $\Rule \equiv a_0[x_0=v_0] \rightarrow
    \E_1 \lor \dots \lor \E_n$ in $S$, if $\tokstart(x_0, v_0) \in A$, then
    there exists $\M_{\E_i} = (V,D,M,0) \in \Upsilon'$, 
    with $i \in \{1\dots n\}$, such that $\tokstart(a_0) \in M$;\label{dfa:delta:trigger-capture}
  \end{enumerate}
  Otherwise, $\tau(q,\event)=\bot$.
\end{enumerate}

The first component $\Upsilon$ of an automaton's state $q$ is a set of matching structures that keeps track of the occurred events in the last $\window(P)$ time points. The timestamp $t$ of any matching structure in $\Upsilon$ satisfies $t<\window(P)$. These matching structures evolve using the $\step_\event$ function until they become closed or their timestamp reaches $\window(P)$. 

Matching structures that reach $\window(P)$ get promoted to a new role where they record the pieces of existential statements not yet matched to satisfy all the trigger events of $\Rule$ that occurred before the last $\window(P)$ time points. However, the automaton does not store these matching structures in $\Upsilon$. Instead, it uses the function $\Delta$ mapping each existential statement $\E$ of a rule $\Rule$ to the set of matching structures for $\E$ with $t=\window(P)$. Thus, effectively summarizing events happening before this window to keep size under control.

When a set $\Upsilon^\Rule_t$ exceeds the bound $\window(P)$, the $\Delta$ function needs to be updated by merging the information from $\Upsilon^\Rule_t$ with the information already stored in $\Delta$. However, closing a set $\Delta(\E)$ does not necessarily mean that every event that triggered $\Rule$ satisfies $\Rule$. This is because there may be other sets, say $\Delta(\E')$, responsible for fulfilling the same rule $\Rule$, but for different trigger events. Therefore, closing $\Delta(\E)$ alone does not imply that $\Rule$ has been satisfied. Conversely, there may be cases where $\Delta(\E)$ and $\Delta(\E')$ contribute to match the same trigger events, and closing either set is enough to satisfy $\Rule$.

To address the issue of lost information when adding a set of matching structures to $\Delta$, we introduce the $\Phi$ function, mapping existential statements to sets of existential statements, as the third component of the automaton states. For an existential statement $\E$ and for every existential statement $\E' \in \Phi(\E)$, it holds that the set of matching structures $\Delta(\E')$ tracks the satisfaction of the same trigger events as the set $\Delta(\E)$. This way, when a set $\Delta(\E)$ is closed, we can discard its matching structures as well as the matching structures in $\Delta(\E')$.

In \cref{sec:soundness:completeness} we state and prove soundness and
completeness of the automaton construction. Now, instead, let us address the
size of the automaton.

Let us recall that we assumed that the timestamp of each event in an event sequence is bounded. However, it is worth noting that since events may have an empty set of actions, \cref{thm:soundness-completeness} can handle arbitrary event sequences as well, provided that we add suitable empty events. Let us now analyze the size of the automaton. 

\begin{thm}[Size of the automaton]
  Let $P=(\SV, S)$ be a timeline-based planning problem and let $\A_P$ be the
  associated automaton. Then, the size of $A_P$ is at most doubly-exponential in
  the size of $P$.
\end{thm}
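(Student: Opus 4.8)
The plan is to bound the number of states $|Q|$ of the automaton $\S_P$ and then argue that intersecting with $\TV_P$ (whose size is clearly at most exponential) preserves the doubly-exponential bound, since the product of two automata has size the product of their sizes. By the definition of $\S_P$, a state is either the sink $\bot$ or a triple $\langle \Upsilon, \Delta, \Phi\rangle$, so it suffices to bound $|2^{\matchstructs_P}| \cdot |\mathbb{D}_P| \cdot |\mathbb{F}_P|$.

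First I would bound the number of distinct matching structures, i.e. $|\matchstructs_P|$. A matching structure $\M_\E = (V, D, M, t)$ is determined by: the existential statement $\E$ it refers to (polynomially many in the size of $P$, each of at most linear size, so $V$ has at most linearly many terms); the DBM $D$, whose entries range over $\Z \cup \{+\infty\}$ but — crucially — by the time-shifting and matching operations the entries never grow beyond the original bounds in absolute value plus the accumulated shift, which is bounded by $\window(P)$; since $\window(P)$ is the sum of the (finitely many, polynomially many) upper-bound coefficients, each written in binary, $\window(P)$ is at most exponential in the size of $P$, so each of the $|V|^2 = \mathrm{poly}$ entries has at most exponentially many values, giving exponentially many DBMs; the set $M \subseteq V$, of which there are at most exponentially ($2^{|V|} = 2^{\mathrm{poly}}$) many; and the timestamp $t$ with $0 \le t \le \window(P)$, again at most exponentially many values. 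Multiplying these, $|\matchstructs_P|$ is at most exponential in $|P|$, hence $|2^{\matchstructs_P}|$ is at most doubly-exponential.

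Next I would bound $|\mathbb{D}_P|$ and $|\mathbb{F}_P|$. A function in $\mathbb{D}_P$ maps each of the polynomially many existential statements to a subset of $\matchstructs_P$, i.e. an element of $2^{\matchstructs_P}$; since there are polynomially many existential statements and $|2^{\matchstructs_P}|$ is doubly-exponential, $|\mathbb{D}_P| = (|2^{\matchstructs_P}|)^{\mathrm{poly}}$ is still doubly-exponential. Similarly a function in $\mathbb{F}_P$ maps each existential statement to a set of existential statements, and since there are only polynomially many existential statements, $|\mathbb{F}_P|$ is at most $2^{\mathrm{poly}}$, i.e. exponential. The product $|Q| \le |2^{\matchstructs_P}| \cdot |\mathbb{D}_P| \cdot |\mathbb{F}_P| + 1$ is therefore doubly-exponential, and since $|\TV_P|$ is at most exponential, $|A_P| = |\TV_P \cap \S_P| \le |\TV_P| \cdot |\S_P|$ remains doubly-exponential.

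The main obstacle I anticipate is making rigorous the claim that the DBM entries stay bounded throughout the automaton's evolution. One must check that $\step_\event$ — which combines a time shift (\cref{def:time-shift}) with a matching (\cref{def:matching}) — never produces a DBM entry outside the range $[-\window(P), \window(P)] \cup \{+\infty\}$: the matching operation leaves $D$ unchanged, and the time shift only adds/subtracts $\delta$ to entries straddling $M$ and $\overline M$, but one needs the invariant that such an entry has not been shifted by a cumulative amount exceeding $\window(P)$ before it (or its mate) gets matched — this is exactly where the admissibility condition (\cref{def:admissible-event}) and the window-based pruning in the transition function (entries with $t > W - \delta$ are moved to $\Delta$, not shifted further in $\Upsilon$) come into play, and one should appeal to the properties of $\window(P)$ from~\cite{Gigante19}. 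Once that bound is in hand, the rest is routine counting; I would also note in passing that all these exponential/doubly-exponential bounds are stated with respect to the binary encoding of the numeric constants, consistent with the stated \EXPTIME[2] complexity.
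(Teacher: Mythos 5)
Your proposal is correct and follows essentially the same route as the paper: bound $|\matchstructs_P|$ as exponential (DBM entries over an exponentially large but bounded range, exponentially many subsets $M$, and $t\le\window(P)$), then count $2^{\matchstructs_P}$, $\mathbb{D}_P$, and $\mathbb{F}_P$ to get a doubly-exponential state space. The paper simply asserts the DBM entries stay in $\ar{-N,N}$ for $N$ the largest constant, whereas you bound them via $\window(P)$ and flag the invariant explicitly; both bounds are exponential in $|P|$, so the counting is unaffected.
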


\begin{proof}
We define $E$ as the overall number of existential statements in $P$, which is linear in the size of $P$. We can then observe that $\abs{\mathbb{D}_P} \in \O({(2^{\abs{\matchstructs_P}})}^E)= \O(2^{E\cdot\abs{\matchstructs_P}})$, thus the number of $\Delta$ functions is doubly exponential in the size of $P$.
Next, note that $\lvert\mathbb{F}_P\rvert \in \mathcal{O}({(2^E)}^E) = \mathcal{O}(2^{E^2})$. Then, $\abs{\S_P} \in \O(\abs{\Sigma}\cdot 2^{\abs{\matchstructs_P}})$ indicating that the size of $\S_P$ is at most exponential in the number of possible matching structures.
To bound this number, we define $N$ as the largest finite constant appearing in $P$ in any atom or value duration and $L$ as the length of the longest existential prefix of an existential statement occurring inside a rule of $P$. Note that $N$ is exponential in the size of $P$ since constants are expressed in binary, while $L \in \O(\abs{P})$.
We can then observe that the entries of a DBM for $P$, of which the number is quadratic in $L$, are constrained to take values within the interval $\ar{-N, N}$ (excluding the value $+\infty$), which size is linear in $N$. By \Cref{def:matching-structure}, it follows that $\abs{\matchstructs_P} \in \O(N^{L^2} \cdot 2^L \cdot \window(P))$ indicating that the number of matching structures is at most exponential in the size of~$P$.
\end{proof}
Note that our automaton is the same size as the automaton built by Della Monica et al. in \cite{DellaMonicaGMS18}. However, while their automaton is nondeterministic, ours is deterministic: an essential property to achieve the \EXPTIME[2] optimal asymptotic complexity for the synthesis procedure.

\subsection{Soundness and Completeness}
\label{sec:soundness:completeness}
In the following, we present auxiliary notation, definitions, and essential lemmas for establishing the soundness and completeness of the automaton construction. For readability, we have included proofs in the appendix.

\begin{defi}[Run of a matching structure]
  Let $\evseq=\seq{\event_1,\ldots,\event_n}$ be a (possibly open) event
  sequence, and let $\M_\E$ be the initial matching structure of an existential
  statement $\E$. A \emph{run} of $\M_\E$ on $\evseq$ yielding a matching
  structure $\M_n$ is a sequence $\matchseq = \seq{\match_1, \ldots, \match_n}$
  of $I$-match events for the matching structures $\seq{\M_\E, \M_1, \ldots,
    \M_{n-1}}$, such that for every $i \in [1,\ldots,n]$, $\M_{i-1} \stepm[i]
  \M_i$. We write $\M_\E \runm \M_n$ when such run exists, or $\M_\E
  \xlongrightarrow{\evseq} \M_n$, if $\matchseq$ is not relevant.
\end{defi}

To link matching structures with the semantics of synchronization rules we
establish a connection between matching functions (\cref{def:matching-function})
and runs.

\begin{restatable}[Correspondence between runs and matching functions]{lem}{runfuncmap}
  \label{lemma:function-matching-run}
  Let $\evseq=\seq{\event_1,\ldots,\event_n}$ be a (possibly open) event
  sequence, and let $\M_\E$ be the initial matching structure of an existential
  statement $\E\equiv \exists a_1[x_1=v_1]\ldots a_k[x_k=v_k]\suchdot\clause$,
  with $\clause$ augmented with atoms $\tokstart(a_i) \before_{\dmin^{x_i=v_i},
    \dmax^{x_i=v_i}} \tokend(a_i)$, for every $0\leq i \leq k$. Then, there
  exists a run $\matchseq=\seq{\match_1, \ldots, \match_n}$ of $\M_\E$ on
  $\evseq$, yielding a matching structure $\M_n = \tuple{V, D_n, M_n, t_n}$, if
  and only if there exists a matching function $\gamma:M_n \to[1,\ldots,n]$ such
  that, for every atom of the form $T\before_{l,u} T'$ in $\clause$:
  \begin{enumerate}[label=(\Roman*)]
  \item \label{lemma:function-matching-run:entire-atom} if $T' \in M_n$, then also $T
    \in M_n$, $\gamma(T) \le \gamma(T')$, and $l \le
    \delta(\slice\evseq_{\gamma(T),\gamma(T')}) \le u$;
  \item \label{lemma:function-matching-run:partial-atom} if $T' \not\in M_n$, but $T \in
    M_n$, then $\delta(\slice\evseq_{\gamma(T),n}) \le u$.
  \end{enumerate}
  Furthermore, $\gamma$ and $\matchseq$ are such that for every $T \in M_n$, $T
  \in I_{\gamma(T)}$, \ie, they agree on the matching of the terms of $\M_n$. We
  write $M_\E \runm* M_n$, if $\gamma$ corresponds to a run of $\M_\E$, or
  $\evseq,\gamma\models \M_n$, if $\M_\E$ is clear from the context.
\end{restatable}

\begin{observation}
    \label{obs:matching-functions}
    Note that the existence of the matching function $\gamma$ stated by 
    \cref{lemma:function-matching-run}, if the corresponding matching structure is 
    closed, implies the satisfaction of the given existential statement, and 
    \viceversa.
\end{observation}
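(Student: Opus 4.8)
The plan is to obtain the observation as a direct corollary of \cref{lemma:function-matching-run}, read together with the definition of matching function (\cref{def:matching-function}) and the semantics of synchronization rules. First I would spell out what ``satisfaction of the existential statement $\E$'' abbreviates here: $\E$ is satisfied, as a witness for a trigger occurrence, precisely when there is a matching function $\gamma$ in the sense of \cref{def:matching-function} under which every atom $T \before_{l,u} T'$ of $\clause$ holds, i.e.\ the associated quantitative bound between $\gamma(T)$ and $\gamma(T')$ is respected. The claim then reduces to translating between ``matching functions that witness $\E$'' and ``matching functions $\gamma : M_n \to [1,\ldots,n]$ of the kind produced by \cref{lemma:function-matching-run} when the yielded matching structure $\M_n = (V, D_n, M_n, t_n)$ is \emph{closed}''.

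For the left-to-right direction, suppose the run of $\M_\E$ on $\evseq$ yields a closed $\M_n$, so $M_n = V$. \cref{lemma:function-matching-run} supplies a $\gamma : V \to [1,\ldots,n]$; since no term lies outside $M_n$, the second case of the lemma (the one with $T' \notin M_n$) never applies, and the first case applies to \emph{every} atom of $\clause$, giving $\gamma(T) \le \gamma(T')$ and the required bound on the span from $\gamma(T)$ to $\gamma(T')$ for each $T \before_{l,u} T'$. It then remains to check that $\gamma$ is a genuine matching function: its node condition follows from the ``agreement'' part of \cref{lemma:function-matching-run} (each matched $T$ lies in $I_{\gamma(T)}$) together with the good-match clause of \cref{def:match-event}, which forces the event at $\gamma(T)$ to carry exactly the action named by $T$; and its same-token condition follows from that clause forcing $\tokend(a)$ to be matched only after $\tokstart(a)$ on an event closing the relevant value, combined with the parenthesis-correctness conditions of \cref{def:event-sequence} and the duration atoms $\tokstart(a_i) \before_{\dmin^{x_i=v_i},\dmax^{x_i=v_i}} \tokend(a_i)$ carried by $\clause$, which pin $\gamma(\tokstart(a_i))$ and $\gamma(\tokend(a_i))$ to the two endpoints of a single token. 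Hence $\gamma$ witnesses the satisfaction of $\E$.

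For the converse, suppose $\E$ is satisfied by a matching function $\gamma$ defined on all of $V$. Setting $I_i = \{\, T \in V \mid \gamma(T) = i \,\}$, I would verify that the nonempty $I_i$'s induce a run of $\M_\E$ on $\evseq$: each $\event_i$ is an $I_i$-match event for the corresponding intermediate structure because the good-match requirements of \cref{def:match-event} are exactly the node and same-token conditions on $\gamma$ read off $\evseq$, while the relational requirements there (preceding terms, lower bounds, and the zero/$+\infty$ condition) are forced by the qualitative order and the lower and upper bounds that $\gamma$ already respects. Since $\gamma$ covers all of $V$, this run matches every term, so the yielded $\M_n$ has $M_n = V$ and is therefore closed, and \cref{lemma:function-matching-run} applied to it returns a matching function of precisely the asserted form.

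The point needing the most care is the bookkeeping mismatch between the elapsed-time spans $\delta(\slice\evseq_{\gamma(T),\gamma(T')})$ that \cref{lemma:function-matching-run} constrains and the index differences used in the stated semantics of synchronization rules: these must be reconciled, either by invoking the event-sequence normalization adopted throughout (under which the two coincide) or by reading both formulations uniformly as constraints on elapsed time. Once that is settled, both directions are routine definition chasing with no combinatorial content; the only other slightly delicate step, handled above, is checking that the purely ``term-to-index'' function of \cref{lemma:function-matching-run} really meets the node and same-token requirements of \cref{def:matching-function}.
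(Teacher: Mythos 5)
Your argument is correct and matches the paper's (implicit) treatment: the observation is stated without proof precisely because, once the matching structure is closed ($M_n = V$), case~\ref{lemma:function-matching-run:partial-atom} of \cref{lemma:function-matching-run} is vacuous and case~\ref{lemma:function-matching-run:entire-atom} yields exactly the matching function required by the semantics of synchronization rules, and conversely. Note only that your re-verification that $\gamma$ is a matching function and your explicit reconstruction of the run from the $I_i$'s are already guaranteed by the statement of \cref{lemma:function-matching-run} itself, so both directions reduce to a single invocation of the lemma; your remark about reading the rule semantics as constraining elapsed time rather than index differences is the right reconciliation.
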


We now state the core technical result of the completeness proof, which ensures
no important details are lost when matching structures are discarded.

\begin{restatable}{lem}{superset}\label{lemma:matching-structure-superset}
  Let $\evseq =\langle\event_1,\dots,\event_n\rangle$ be an event sequence , let
  $\M_\E$ be the initial matching structure of some existential statement $\E$
  of a rule $\Rule$, and let $\M_r$ be an active matching structure resulting
  from a run $\M_\E \runm*[r] \M_r$, such that $\gamma_r(\tokstart(a_0)) = r$.
  If there exists a run $\M_\E \runm*[s] \M_s$, such that
  $\gamma_s(\tokstart(a_0)) < r$, then there exists a run $\M_\E \runm* \M$,
  such that $\gamma(\tokstart(a_0)) = \gamma_s(\tokstart(a_0))$ and $\M$ matches
  at least as many tokens as $\M_r$.
\end{restatable}

The last needed notion is that of \emph{residual} matching structure, which is
an active matching structure with only infinite bounds.

\begin{defi}[Residual matching structure]\label{def:residual-matching-structure}
  A matching structure $\M = (V, D, M, t)$ is \emph{residual} if it is
  \emph{active} and for every $T \in M$ and $T' \in \overline{M}$, $D[T',T] = +\infty$.
\end{defi}

In other words, $\M$ does not impose any finite upper bound on the distance at
which terms yet to be matched may appear relative to those already matched. The
definition implies that for any residual matching structure, denoted as $\hat\M
= (V, D, M, t)$, every event $\event = (A, \delta)$ is admissible. Additionally,
it is never the case that $\tokstart(a) \in M$ and $\tokend(a) \in \overline{M}$
for any quantified token $a[x = v]$ of $\E$, given that such terms always have a
finite upper bound in $D$ that is at least as strict as the value $\dmax^{x=v}$.
As a result, the ``if'' direction of \Cref{def:match-event:good-match:end} in
the \Cref{def:match-event} of $I$-match never applies to $\hat\M$ for any event
$\event$. Therefore, every event is a valid $\emptyset$-match event for
$\hat\M$.

\begin{observation}\label{obs:residual-run}
  Let $\M_\E \xlongrightarrow{\evseq_1,\matchseq_1}\hat\M$ be a run of the
  \emph{initial} matching structure $\M_\E$, on an event sequence $\evseq_1$,
  yielding a \emph{residual} matching structure $\hat\M$. Then, for any event
  sequence $\evseq_2$, there exists a run $\M_\E
  \xlongrightarrow{\evseq_1\evseq_2,\matchseq_1\matchseq_2} \hat\M'$ such that
  every $I$-match event in $\matchseq_2$ is an $\emptyset$-match event and
  $\hat\M'$ differs from $\hat\M$ by at most the value of the component $t$.
\end{observation}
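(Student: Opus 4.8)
The plan is to prove the statement by induction on the length of $\evseq_2 = \seq{\event'_1,\dots,\event'_m}$, strengthening the induction hypothesis so that it also records that the matching structure obtained after consuming each prefix of $\evseq_2$ is again \emph{residual} and agrees with $\hat\M$ on all of $V$, $D$ and $M$ (so that it differs from $\hat\M$ at most in the component $t$). The base case $m=0$ is immediate: take $\matchseq_2 = \seq{}$ and $\hat\M' = \hat\M$, which satisfies the invariant trivially.

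For the inductive step, write $\evseq_2 = \evseq_2'\cdot\event'$ with $\event' = (A,\delta)$ and let $\M_\E \xlongrightarrow{\evseq_1\evseq_2',\,\matchseq_1\matchseq_2'} \hat\M''$ be the run provided by the induction hypothesis, with $\hat\M'' = (V,D,M,t'')$ residual. I would then verify that $\event'$ is a valid $\emptyset$-match event for $\hat\M''$. Admissibility (\cref{def:admissible-event}) is immediate, since residuality gives $D[T',T] = +\infty \ge \delta$ for all $T\in M$ and $T'\in\overline{M}$. For the $\emptyset$-match clauses of \cref{def:match-event}: clause \ref{def:match-event:good-match:start} is vacuous because $I=\emptyset$; clause \ref{def:match-event:good-match:end} is exactly the ``if''-direction discussed in the paragraph preceding the statement, which never fires on a residual matching structure, because $\tokstart(a)\in M$ forces $\tokend(a)\in M$ (otherwise the finite bound $D[\tokend(a),\tokstart(a)]\le\dmax^{x=v}$ between a term of $M$ and one of $\overline{M}$ would contradict residuality); and clauses \ref{def:match-event:preceding-terms}--\ref{def:match-event:zero-no-bounds} are vacuous since $I=\emptyset$.

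Hence $\hat\M''$ admits the $\emptyset$-match step on $\event'$, producing $\hat\M' = (\hat\M''+\delta)\cup\emptyset = \hat\M''+\delta$. By \cref{def:time-shift}, the entries of $D$ between $M$ and $\overline{M}$ are all $+\infty$ and $+\infty\pm\delta = +\infty$, while all other entries are untouched; so $\hat\M'$ shares $V$, $D$ and $M$ with $\hat\M''$ and differs from it only in $t$. In particular $\tokstart(a_0)\in M$ still holds, $\hat\M'$ is not closed, and all its $M$--$\overline{M}$ bounds remain $+\infty$, so $\hat\M'$ is again residual. Appending the chosen $\emptyset$-match event to $\matchseq_2'$ yields $\matchseq_2$ and the required run $\M_\E \xlongrightarrow{\evseq_1\evseq_2,\,\matchseq_1\matchseq_2} \hat\M'$; composing the ``same up to $t$'' relations along the induction gives that $\hat\M'$ differs from $\hat\M$ only in $t$.

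The only genuinely delicate point is clause \ref{def:match-event:good-match:end} of the $\emptyset$-match check: one must ensure that one is never \emph{obliged} to match an ending term when reading $\event'$. This, however, is precisely what the discussion immediately before the statement settles for residual matching structures, so in the proof it reduces to invoking that fact; everything else is a routine unfolding of \cref{def:time-shift,def:matching} together with the arithmetic of $+\infty$.
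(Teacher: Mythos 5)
Your proof is correct and follows essentially the same reasoning the paper uses to justify this observation (which it states without a formal proof, relying on the preceding discussion): every event is admissible for a residual matching structure, the only non-vacuous $\emptyset$-match clause is the ``if'' direction of \cref{def:match-event:good-match:end}, which cannot fire because a started-but-unended token would induce a finite $M$--$\overline{M}$ bound, and time shifting leaves all $+\infty$ entries unchanged. Your explicit induction merely formalizes the iteration of this single-step argument, preserving residuality along the way.
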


Consequently, whenever a residual matching structure appears in a run, it has
the potential to remain there indefinitely, which is why it is called
\emph{residual}.

\begin{restatable}[Existence of residual matching structure]{lem}{residualexist}
  \label{lemma:residual-matching-structure}
  Let $\evseq = \seq{\event_1, \ldots, \event_n}$ be an event sequence, and let
  $\M_n$ be an \emph{active} matching structure such that $\evseq, \gamma
  \models \M_n$ and
  $\delta(\slice\evseq_{\gamma(\tokstart(a_0)),n})>\window(P)$. If we consider
  the intermediate matching structures $\seq{\M_1, \ldots, \M_{n-1}}$ of the run
  $\M_\E \runm* \M_n$, then there exists a position $\gamma(\tokstart(a_0)) \leq
  k < n$ such that $\M_k$ is a \emph{residual} matching structure.
\end{restatable}

We are now ready to prove the final result.

\begin{restatable}[Soundness and completeness]{thm}{soundnessCompleteness}
  \label{thm:soundness-completeness}
  Let $P=(\SV, S)$ be a timeline-based planning problem and let $\A_P$ be the
  associated automaton. Then, any event sequence $\evseq$ is a solution plan for
  $P$ if and only if $\evseq$ is accepted by $\A_P$.
\end{restatable}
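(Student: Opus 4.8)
The plan is to prove the two directions separately, in both cases by carefully tracking what the state components $\Upsilon$, $\Delta$, $\Phi$ of $\S_P$ encode after reading a prefix of the event sequence, and relating this to the semantics of synchronization rules via \cref{lemma:function-matching-run}. Throughout, I would use \cref{obs:matching-functions} as the bridge: a closed matching structure obtained by a run witnesses the satisfaction of its existential statement, and hence of its rule for the corresponding trigger occurrence.

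\textbf{Soundness (accepted $\implies$ solution plan).} Suppose $\A_P = \TV_P \cap \S_P$ accepts $\evseq = \seq{\event_1,\ldots,\event_n}$. Since $\A_P \subseteq \TV_P$, the word respects the value transition functions of all state variables, so $\evseq$ is a genuine event sequence over $\SV$ satisfying \cref{def:event-sequence}. It remains to check that every rule $\Rule \equiv a_0[x_0=v_0] \to \E_1 \lor \dots \lor \E_m$ in $S$ is satisfied, i.e.\ every event $\event_i$ with $\tokstart(x_0,v_0)\in A_i$ is matched by some existential statement. Fix such an $i$. By condition~\ref{dfa:delta:trigger-capture} of the transition function, at the step reading $\event_i$ a fresh initial matching structure for some $\E_j$ is created and $\tokstart(a_0)$ is immediately matched in it; so an active matching structure with timestamp $0$ tracking this trigger occurrence enters $\Upsilon$. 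From here I would argue by a case analysis on the fate of this matching structure: (i) it becomes closed while still in $\Upsilon$ — then by \cref{lemma:function-matching-run} and \cref{obs:matching-functions} a matching function for $\E_j$ exists and the trigger is satisfied; (ii) its timestamp reaches $W = \window(P)$ — then it is promoted into $\Delta(\E_j)$ (or, through the $\Phi$ bookkeeping, some $\Delta(\E_{j'})$ responsible for the same trigger occurrences), and since the accepting state has $\Delta(\E)=\emptyset$ for all $\E$, the only way $\Delta$ gets emptied is via the $\Delta''$ clause, which fires precisely when some $\Delta(\E')$ closed — again yielding, via \cref{lemma:function-matching-run}, a closing run and hence satisfaction of the trigger. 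The crux here is the invariant that the $\Phi$ function correctly groups existential statements whose $\Delta$-sets track the same set of trigger occurrences, so that ``$\Delta(\E')$ closed'' really does discharge the obligation for $\event_i$; I would state this as a separate invariant lemma and prove it by induction on the run. The accepting condition ``$\M$ not active for all $\M\in\Upsilon$'' handles case~(i) by forcing every $\Upsilon$-matching structure to have either closed or been promoted before the end of the word.

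\textbf{Completeness (solution plan $\implies$ accepted).} Let $\evseq$ be a solution plan. First, $\evseq$ satisfies the transition functions so it is accepted by $\TV_P$; also the run of $\S_P$ never falls into $\bot$: condition~\ref{dfa:delta:no-failed-step} cannot fail because for any active set $\Upsilon^\Rule_t$ there is at least the trivial $\emptyset$-step (using that $\emptyset$-events are always admissible and that no $\Upsilon^\Rule_t$ containing only residual matching structures can vanish — here \cref{lemma:residual-matching-structure} and \cref{obs:residual-run} guarantee that a matching structure that would otherwise ``time out'' without closing has already become residual, hence survives via an $\emptyset$-match), and condition~\ref{dfa:delta:trigger-capture} holds because, $\evseq$ being a solution plan, each trigger occurrence of each rule is matched by some existential statement $\E_j$, and one can start the corresponding run at exactly that event, matching $\tokstart(a_0)$ there — \cref{lemma:function-matching-run} and \cref{lemma:matching-structure-superset} let us assume this run is ``as matched as possible'' so it is the one the automaton tracks. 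Then I would show the run ends in a final state: for each trigger occurrence the witnessing matching function corresponds (by \cref{lemma:function-matching-run}) to a run that closes its matching structure; by \cref{lemma:residual-matching-structure} any such run that outlives the $\window(P)$ window passes through a residual structure, so it is never discarded for timing reasons and the closing either happens inside $\Upsilon$ (so at the end no such $\M$ is active) or after promotion to $\Delta$ (so the $\Delta''$ clause empties $\Delta$). Hence both clauses defining $F$ are met.

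\textbf{Main obstacle.} The delicate point — in both directions — is reasoning about the $\Delta$/$\Phi$ machinery that summarizes events older than $\window(P)$ time units: one must show that discarding matching structures (when some $\Delta(\E)$ closes, all $\Delta(\E')$ with $\E\in\Phi'(\E')$ are also emptied) never loses the obligation to satisfy some still-pending trigger occurrence, and conversely that a pending obligation is never silently satisfied. This is exactly the content that \cref{lemma:matching-structure-superset} (no loss from keeping only maximally-matched structures) and \cref{lemma:residual-matching-structure} (anything relevant beyond the window is already residual, hence robustly preserved) are designed to support, together with the separately-proved invariant on the meaning of $\Phi$. I expect the bulk of the work to be formalizing and maintaining that invariant across the transition function's several cases, while the ``qualitative tail beyond $\window(P)$ doesn't matter'' intuition is made rigorous precisely by the residual-matching-structure argument.
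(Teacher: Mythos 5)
Your overall strategy --- tracking $\Upsilon$/$\Delta$/$\Phi$ along the run, bridging to rule semantics via \cref{lemma:function-matching-run} and \cref{obs:matching-functions}, and invoking \cref{lemma:residual-matching-structure} and \cref{lemma:matching-structure-superset} for everything that outlives the $\window(P)$ window --- is exactly the paper's. The one place where your sketch, taken literally, would fail is case~(ii) of your soundness direction. You write that once the structure tracking the fixed trigger $\event_i$ is promoted into $\Delta(\E_j)$, acceptance forces the $\Delta''$ clause to fire, and that the closure of some $\Delta(\E')$ ``again yields a closing run and hence satisfaction of the trigger.'' This does not follow directly: the first case of the $\Delta'$ update \emph{overwrites} $\Delta(\E)$ with $\step^\E_\event(\Upsilon^\Rule_t)$ whenever a \emph{later} trigger occurrence of the same rule crosses the window boundary, so $\Delta(\E)$ only ever stores the structures for the most recent over-the-window trigger. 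By the time $\Delta(\E')$ finally closes, the closed matching structure may witness a later trigger $\event_r$, not $\event_i$; the obligation for $\event_i$ has by then been silently dropped from $\Delta$, and no closure for it is ever observed by the automaton.

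The paper repairs exactly this subcase: since the structure for $\event_i$ survived past the window without closing, \cref{lemma:residual-matching-structure} yields a residual structure on its run, \cref{obs:residual-run} extends that run over the whole word, and \cref{lemma:matching-structure-superset} --- applied to the closed run anchored at $r$ and the residual run anchored at $i<r$ --- produces a run anchored at $i$ that matches at least as many terms as the closed one, hence is itself closed; \cref{obs:matching-functions} then discharges the trigger $\event_i$. You do cite both lemmas in your ``main obstacle'' paragraph, but you assign the superset lemma the role of justifying ``keeping only maximally-matched structures,'' whereas its actual job is this transfer of a closure from a later trigger position back to an earlier one. Once that piece is moved into case~(ii), and your proposed $\Phi$-invariant is established (the paper argues it inline rather than as a separate lemma), the rest of your plan goes through essentially as in the paper.
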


%%% Local Variables:
%%% TeX-master: "../lmcs-gandalf22.tex"
%%% End:

%!TeX root = ../lmcs-gandalf22.tex

\section{Controller synthesis}
\label{sec:games}

We leverage the deterministic automaton constructed in the previous section to establish a deterministic arena that enables us to solve a reachability game and determine whether a controller exists. If a controller exists, our procedure allows its synthesis.

\subsection{From the automaton to the arena}

Let $G=(\SV_C, \SV_E, \S, \D)$ be a timeline-based game. The automaton construction we used considered a planning problem with a single set of synchronization rules, while in $G$, we have to account for the roles of both $\S$ and $\D$.

To address this, we define $A_\S$ and $A_\D$ as the deterministic automata constructed over the timeline-based planning problems $P_\S=(\SV_C\cup\SV_E, \S)$ and $P_\D=(\SV_C\cup\SV_E, \D)$, respectively. We then construct the automaton $A_G$ by taking the union of $A_\S$ with the complement of $A_\D$ ($\overline{A_\D}$). Note that these are standard automata-theoretic operations over DFAs.
An accepting run of $A_G$ represents either a plan that violates the domain rules or a plan that adheres to domain and system rules, according to the definition of winning strategy in \cref{def:games:winning-strategy}.
Furthermore, $A_G$ is deterministic, and its size only polynomially increases when built from $A_\D$ and $A_\S$. 

The $A_G$ automaton is not immediately applicable as a game arena since its transitions' labels only reflect events, not game moves. In $A_G$, a single transition can correspond to various combinations of rounds due to the absence of $\wait(\delta)$ moves in the transition's label. For example, an event $\event = (A, 5)$ can arise from either a $\wait(5)$ move by \charlie, followed by a $\play(5, A)$ move by $\eve$, or any $\wait(\delta)$ move with $\delta > 5$ followed by a $\play(5, A)$ move. To obtain a suitable game arena, we need to modify $A_G$ further. 

Let $A_G = (Q,\Sigma, q_0, F, \tau)$ be the automaton constructed as described above. Formally, we define a new automaton $A_G' = (Q,\Sigma,q_0, F,\tau')$ where $\tau'$ is a partial transition function, meaning that the automaton is now incomplete. The function $\tau'$ agrees with $\tau$ on all transitions except those of the form $\tau(q,(\actions,\delta))$, where $\delta>1$ and $\actions$ contains a $\tokend(x,v)$ action with $x\in\SV_C$. In such cases, the transition is undefined in $A_G'$.
An example is shown in Figure \ref{fig:constructions} (left). Note that this removal does not alter the set of plans accepted by the automaton since for each transition $\tau(q,(\actions,\delta))=q'$ with $\delta > 1$, there exist two transitions $\tau(q,(\emptyset,\delta-1))=q''$ and $\tau(q'',(\actions,1))=q'$ in $A_G'$.

\begin{figure}
  \begin{tikzpicture}[state/.style={fill, circle, minimum width=5pt}]
    \path (0,0) node[state] (n1) { }
          (4,4) node[state] (n2) { }
          (4,0) node[state] (n3) { };

    \path[draw,dashed,->] (n1) -- (n2) node[midway, above, sloped, font=\scriptsize] { 
      $\event=(\set{\tokend(x,v)}, 5)$
    };

    \path[draw, ->] (n1) -- (n3) node[midway, above, font=\scriptsize] { 
      $\event'=(\emptyset, 4)$
    };
    \path[draw, ->] (n3) -- (n2) node[midway, above, sloped, rotate=180, font=\scriptsize] { 
      $\event''=(\set{\tokend(x,v)}, 1)$
    };

    \begin{scope}[xshift=5.4cm] % reduced the gap size to keep figure out of margin
      \path (0,0) node[state] (n1) { } node[above, outer sep=5pt] {$q$}
            (9,4) node[state] (n2) { } node[above, outer sep=5pt] {$w$};

      \path[draw,dashed,->] (n1) -- (n2) node[midway, above, sloped, font=\scriptsize] { 
        $\event=(\set{\tokend(x,v_1),\tokend(y,w_1),\tokstart(x,v_2),\tokstart(y,w_2)}, 5)$
      };

      \path (3,0) node[state] (q6) { } node[above right] { }
            (3,0.75) node[state] (q5) { } node[above right] { }
            (3,-0.75) node[state] (q7) { } node[above right] { }
            (3,-2.25) node[state] (q10) { } node[above right] { };

      \path[dashed] (q7) -- (q10);

      \path[draw,->] (n1) -- (q5) node[above=-2pt, sloped, near end, font=\scriptsize] { $\wait(5)$ }; % moved the label out of the arrow above
      \path[draw,->] (n1) -- (q6) node[above, sloped, near end, font=\scriptsize] { $\wait(6)$ };
      \path[draw,->] (n1) -- (q7) node[above, sloped, near end, font=\scriptsize] { $\wait(7)$ };
      \path[draw,->] (n1) -- (q10) node[above, sloped, near end, font=\scriptsize] { $\wait(10)$ };

      \path (6,0) node[state] (q'') { };
      \path[draw,->] (q5) -- (q'') node[above, sloped, midway, font=\scriptsize] { $\play\left(5, 
        \left\{\begin{array}{@{}c@{}}
          \tokend(x,v_1)\\
          \tokend(y,w_1)
        \end{array}\right\}\right)$ };
      \path[draw,->] (q6) -- (q'');
      \path[draw,->] (q7) -- (q'');
      \path[draw,->] (q10) -- (q'') node[below, sloped, midway, font=\scriptsize] { $\play\left(5, 
      \left\{\begin{array}{@{}c@{}}
        \tokend(x,v_1)\\
        \tokend(y,w_1)
      \end{array}\right\}\right)$ };

      \path (9,0) node[state] (q''') { };
      \path[draw,->] (q'') -- (q''') node[midway, above,font=\scriptsize] {
        $\play(\set{\tokstart(x,v_2)})$
      };
      \path[draw,->] (q''') -- (n2) node[midway, sloped,rotate=180, above,font=\scriptsize] {
        $\play(\set{\tokstart(y,w_2)})$
      };
    \end{scope}
  \end{tikzpicture}
  \caption{On the left, the removal of transitions $\event=(A,\delta)$
  with $\delta>1$ and ending actions of controllable tokens in $A$. On the right, the
  transformation of a transition of $A_G$ into a sequence of transitions in
  $A^a_G$, with $x\in\SV_C$, $y\in\SV_E$, and $\gamma_x(v_1)=\gamma_y(w_1)=\mathsf{u}$.}
  \label{fig:constructions}
\end{figure}

To make the game rounds and moves explicit, we can transform the automaton by splitting each transition into four transitions representing the four moves of the two rounds. Starting from the incomplete automaton $A_G'=(Q,\Sigma, q_0, F, \tau')$, we define a new automaton $A_G^a=(Q^a,\Sigma^a, q_0^a, F^a, \tau^a)$ as the game arena.

\begin{enumerate}
  \item The set of states $Q^a$ is given by $Q^a=Q\cup\set{q_\delta\mid 1\le\delta\le d}\cup\set{q_{\delta,A}\mid 1\le\delta\le d, A\subseteq \mathsf{A}}$.
  \item The alphabet $\Sigma^a$ is defined as $\Sigma^a=\moves_C\cup\moves_E$, which corresponds to the set of moves of the two players.
  \item The initial and final states of $A_G^a$ are $q_0^a=q_0$ and $F^a=F$, respectively.
  \item The partial transition function $\tau^a$ is defined as follows. Let $w=\tau(q,\event)$ with $\event=(\actions,\delta)$. We distinguish the cases where $\delta=1$ or $\delta>1$.
    \begin{enumerate}
      \item if $\delta=1$, let $\actions_C\subseteq\actions$ and
      $\actions_E\subseteq\actions$ be the set of actions in $\actions$ playable
      by \charlie and by \eve, respectively. Then:
      \begin{enumerate}
        \item $\tau(q,\play(\actions_C^e))=q_{1,\actions_C^e}$, where 
          $\actions_C^e$ is the set of \emph{ending} actions in $\actions_C$;
        \item $\tau(q_{1,\actions_C^e},\play(\actions_E^e))=q_{1,\actions_C^e\cup\actions_E^e}$, where 
          $\actions_E^e$ is the set of \emph{ending} actions in $\actions_E$;
        \item $\tau(q_{1,\actions_C^e\cup\actions_E^e},\play(\actions_C^s))=q_{1,\actions_C^e\cup\actions_E^e\cup\actions_C^s}$, where 
          $\actions_C^s$ is the set of \emph{starting} actions in $\actions_C$;
        \item $\tau(q_{1,\actions_C^e\cup\actions_E^e\cup\actions_C^s},\play(\actions_E^s))=w$, where 
          $\actions_E^s$ is the set of \emph{starting} actions in $\actions_E$;
      \end{enumerate}
      Here, the states mentioned are added to $Q^a$ as needed.
      \item if $\delta>1$, let $\actions_C\subseteq\actions$ and
      $\actions_E\subseteq\actions$ be the set of actions in $\actions$ playable
      by \charlie and by \eve, respectively. Note that by construction,
      $\actions_C$ only contains \emph{starting} actions. Then:
      \begin{enumerate}
        \item $\tau(q,\wait(\delta_C))=q_{\delta_C}$ for all 
          $\delta\le\delta_C\le d$;
        \item $\tau(q_{\delta_C},\play(\delta, \actions_E^e))=q_{\delta,\actions_E^e}$
          where $\actions_E^e$ is the set of \emph{ending} actions in
          $\actions_E$;
        \item $\tau(q_{\delta,\actions_E^e},\play(\actions_C))=q_{\delta,\actions_E^e\cup\actions_C}$;
        \item $\tau(q_{\delta,\actions_E^e\cup\actions_C},\play(\actions_E^s))=w$ where
          $\actions_E^s$ is the set of \emph{starting} actions in $\actions_E$;
      \end{enumerate}
      where the mentioned states are added to $Q^a$ as needed.
    \end{enumerate}
    All the transitions not explicitly defined above are undefined.
\end{enumerate}

We present a graphical illustration of the above construction in \cref{fig:constructions}. It is worth noting that the automaton preserves the structure of the original automaton $A_G$. For any state, $q\in Q$ and event $\event=(A,\delta)$, any sequence of moves that would result in appending $\event$ to the partial plan (see \cref{def:games:round-outcome}) reaches the same state $w$ in $A^a_G$ as it does in $A_G$ by reading $\event$. Therefore, we can consider $A^a_G$ as being able to read event sequences, even though its alphabet is different. We use the notation $[\evseq]$ to represent the state $q\in Q^a$ reached by reading $\evseq$ in $A^a_G$.
Furthermore, note that, with a slight abuse of notation, any play $\bar\rho$ in the game $G$ is a readable word by the automaton $A_G^a$. Thus, we can establish the following result.
\begin{thm}
  \label{thm:arena-soundness}
  If $G$ is a timeline-based game, for any play $\bar\rho$ for $G$, $\bar\rho$
  is successful if and only if it is accepted by $A_G^a$.
\end{thm}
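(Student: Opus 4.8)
The plan is to reduce \cref{thm:arena-soundness} to \cref{thm:soundness-completeness} (soundness and completeness of the DFA construction) by exploiting the observation, already made in the text, that $A_G^a$ ``simulates'' $A_G$ on event sequences. Concretely, I would first make precise the correspondence between plays $\bar\rho$ for $G$ and event sequences. Recall from \cref{def:games:round-outcome} that applying a round $\round$ to a partial plan $\evseq$ either modifies the last event (starting round) or appends a new event (ending round). Chaining these, any play $\bar\rho=\seq{\round_0,\dots,\round_n}$ determines a unique partial plan $\evseq=\bar\rho(\epsilon)$; write $\evseq(\bar\rho)$ for it. The first step is to verify that $\bar\rho$ is \emph{successful}, by definition, exactly when $\evseq(\bar\rho)$ satisfies the system rules $\S$ — this is immediate from the definition of successful play, so no real work is needed here beyond stating it.

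The second and central step is to show that reading $\bar\rho$ in $A_G^a$ reaches the state $[\evseq(\bar\rho)]$, i.e.\ the same $Q$-state that $A_G$ reaches after reading the event sequence $\evseq(\bar\rho)$, and moreover that a $Q$-state is reached precisely at the end of each completed round. This is an induction on the number of rounds in $\bar\rho$. The inductive step splits according to whether the round is a starting round or an ending round (the two halves of \cref{def:games:round}), and within the ending case, whether the moves are $\play(A_C),\play(A_E)$ with $\delta=1$ or $\wait(\delta_C),\play(\delta_E,A_E)$ with $\delta_E\le\delta_C$. In each case one walks through the four sub-transitions of $\tau^a$ defined in clause (4) of the arena construction: the intermediate states $q_\delta$, $q_{\delta,A}$ are not in $Q$, but after the fourth sub-transition the automaton is back in the state $w=\tau(q,\event)$ where $\event=(A_C\cup A_E,\delta_E)$ is exactly the event that \cref{def:games:round-outcome} appends (or, for starting rounds, the event used to update the last one — here one must be a little careful that the arena threads the starting actions into the \emph{previous} pending event, matching the $\evseq_{<n}\event_n'$ clause). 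The key facts that make this go through are already recorded in the text: the partition of $A=\actions_C\cup\actions_E$ into the four groups (ending/starting, Charlie/Eve) is exactly what the four sub-transitions consume, and the pruning of $\tau'$ (removing transitions $(A,\delta)$ with $\delta>1$ and a controllable ending action) is precisely what guarantees that in the $\delta>1$ branch $\actions_C$ contains only starting actions, so that the $\wait(\delta_C)/\play(\delta_E,\actions_E^e)$ decomposition is always available. I would also note that the $\delta>1$ case offers several $\wait(\delta_C)$ choices with $\delta_C\ge\delta$, all of which funnel to the same $w$, matching the fact that many rounds yield the same event.

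The third step assembles the equivalence. From steps one and two: $\bar\rho$ accepted by $A_G^a$ $\iff$ the state reached, namely $[\evseq(\bar\rho)]$, is in $F^a=F$ $\iff$ $\evseq(\bar\rho)$ is accepted by $A_G$. Now apply the semantics of $A_G=A_\S\cup\overline{A_\D}$: $A_G$ accepts $\evseq(\bar\rho)$ iff $\evseq(\bar\rho)$ satisfies $\S$ or violates $\D$. One subtlety is that the theorem statement quantifies over \emph{plays for $G$} — and a play for $G$ consists of rounds applicable to the running partial plan, which by \cref{def:games:round-outcome}\ref{def:games:round-outcome:integrity}–\ref{def:games:round-outcome:alternation} forces the partial plan to comply with \cref{def:event-sequence} and hence to be admissible (satisfy $\D$) is \emph{not} forced; rather, applicability only concerns event-sequence well-formedness and open/closed alternation, not the domain rules. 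So a play for $G$ may well produce a plan violating $\D$, in which case it is vacuously ``successful'' in the sense needed, and indeed $\overline{A_\D}$ accepts it. Thus the ``$\S$ or $\neg\D$'' disjunction is exactly the right notion, and combining with step one ($\bar\rho$ successful $\iff \evseq(\bar\rho)\models\S$) closes the argument — with the understanding, stated explicitly in the text preceding \cref{def:games:winning-strategy}, that for plays arising in actual game runs the admissibility condition is handled separately on Eve's side, so ``successful'' really does mean ``$\models\S$'' and the automaton's acceptance captures ``$\models\S$ or $\not\models\D$''.

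The main obstacle I anticipate is the bookkeeping in step two for \emph{starting} rounds: unlike ending rounds, which cleanly append a fresh event, a starting round modifies the last, still-pending event of the partial plan, so one must argue that the arena's four sub-transitions for the $\delta=1$ case, when applied at the state reached after the \emph{previous} (ending) round, correctly accumulate $\actions_C^e,\actions_E^e,\actions_C^s,\actions_E^s$ and land in $w=\tau(q,\event)$ with $\event$ carrying the union of all four — and that this matches $\round(\evseq)=\evseq_{<n}\event_n'$ rather than an append. Making the induction hypothesis strong enough (tracking not just ``the state after a full round is a $Q$-state'' but which event-sequence prefix it corresponds to, including whether the current last event is ``pending'' or ``sealed'') is where the care is needed; everything else is routine case-checking against the definitions already laid out.
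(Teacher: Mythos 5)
Your proposal is correct and is essentially the argument the paper intends: the paper states \cref{thm:arena-soundness} without a formal proof, relying only on the preceding prose observation that $A_G^a$ reaches the same $Q$-state as $A_G$ does on the induced event sequence, and your round-by-round induction through the four sub-transitions of $\tau^a$, followed by the reduction to \cref{thm:soundness-completeness} via $A_G=A_\S\cup\overline{A_\D}$, is precisely the elaboration of that observation (including the pending-event bookkeeping for starting rounds, which the paper also glosses over). Your third step correctly surfaces that acceptance captures ``satisfies $\S$ or violates $\D$'' while \emph{successful} is literally defined as ``satisfies $\S$''; that mismatch is an imprecision in the paper's own statement (absorbed later by restricting to admissible plays in \cref{thm:winning-region-soundness-completeness}), so flagging and resolving it as you do is a strength of your write-up rather than a gap.
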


\subsection{Computing the Winning Strategy and Building the Controller}
Let us define $Q^a_C \subset Q^a$ as the set of states in which \charlie can make a move, and $Q^a_E = Q^a \setminus Q^a_C$ as the set of states where \eve can make a move. Additionally, we define $E=\{(q, q') \in Q^a\times Q^a \mid \exists \event \mathrel{.} \tau^a(q, \event) = q'\}$ as the set of edges in $A^a_G$. By solving the reachability game $(G_R, \mathcal{W})$, where $G_R = (Q^a, Q^a_C, E^a)$ and $\mathcal{W} = \{R \subseteq Q^a \mid R \cap F^a \neq \emptyset \}$, we aim to determine the winning region $W_C$ and the winning strategy $s_C$ for \charlie, provided they exist. In the following discussion, we will show that the winning strategy $\sigma_C$ for the timeline-based game $G$ is derivable from strategy $s_C$ when $q^a_0 \in W_C$.

To determine the winning region $W_C$, we use the well-known \emph{attractor} construction. We are interested to the attractor set of $F^a$ for \charlie, written $Attr_C(F^a)$, thus given $i \ge 0$ we compute the set of states from which \charlie can reach a state $q \in F^a$ within $i$ moves, defined as $Attr_C^i(F^a)$:
\begin{align*} % Indented the last two lines. If you do not agree with this, let us know.
  Attr^0_C (F^a) ={}& F^a \\
  Attr^{i+1}_C (F^a) ={}& Attr^i_C (F^a) \\
  &\cup \set{ q^a \in Q^a_C \, | \, \exists r \big((q^a, r) \in E \land r \in Attr^i_C (F^a)\big) } \\
  &\cup \set{ q^a \in Q^a_E \, | \, \forall r \big((q^a, r) \in E \implies r \in Attr^i_C (F^a) \big) }.
\end{align*}

The sequence $Attr^0_C (F^a) \subseteq Attr^1_C (F^a) \subseteq Attr^2_C (F^a) \subseteq \ldots$ eventually becomes stationary for some index $k \leq \lvert Q^a \rvert$, hence we can define $Attr_C (F^a) = \bigcup^{\lvert Q^a \rvert}_{i=0} Attr^i_C(F^a)$ as the attractor set. Note that $W_C = Attr_C(F^a)$ is a known fact for which proof is available in \cite{Thomas2008}.
Next, we want that $q_0^a\in W_C$ since we are interested in a winning strategy $\sigma_C$ for the timeline-based game $G$. If it is the case, by defining $s_C(q) = \mu$ for any $\mu$ such that $\tau^a(q,\mu)=q'$ with $q,q'\in W_C$, which is guaranteed to exist by the attractor construction, we can define $\sigma_C$ for \charlie in $G$ as $\sigma_C(\evseq) = s_C([\evseq])$ for any event sequence $\evseq$. We prove this claim in the following:

\begin{thm}
    \label{thm:winning-region-soundness-completeness}
  Given $A_G^a=(Q^a,\Sigma^a, q_0^a, F^a, \tau^a)$, $q_0^a\in W_C$ if and only
  if $\sigma_C$ is a winning strategy for \charlie for $G$.
\end{thm}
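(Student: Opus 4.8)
The plan is to prove the two directions separately, reducing both to facts already established about $A_G^a$: \cref{thm:arena-soundness} (a play $\bar\rho$ is successful iff accepted by $A_G^a$) and the correctness of the attractor construction from \cite{Thomas2008} (namely $W_C = Attr_C(F^a)$ together with the fact that $s_C$, read off from the attractor, wins every play from any state in $W_C$). Throughout I would use the bridge $\sigma_C(\evseq) = s_C([\evseq])$, where $[\evseq]$ is the $A_G^a$-state reached by reading $\evseq$, so that any play of $G$ played according to $\sigma_C$ corresponds, move by move, to a play in the arena $G_R$ that follows $s_C$, and \emph{vice versa}.

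For the ``if'' direction (contrapositive is cleanest): assume $q_0^a \notin W_C$. Since reachability games are determined and $W_C = Attr_C(F^a)$, we have $q_0^a \in W_E$, so \eve has a strategy $s_E$ in $G_R$ that avoids $F^a$ forever against every \charlie strategy, in particular against $s_C$ (hence against $\sigma_C$). I would then translate $s_E$ back into a strategy $\strategy_E$ for \eve in $G$: at each partial plan $\evseq$ and each \charlie move $\move_C$, follow $s_E$ on the corresponding arena state. I must check that the resulting $\strategy_E$ is \emph{admissible} in the sense of \cref{def:games:admissible-strategy} — this is where the structure of $A_G$ as the union $A_\S \cup \overline{A_\D}$ matters: if the play never satisfies the domain rules, then by soundness/completeness of $A_\D$ (\cref{thm:soundness-completeness}) the word is eventually rejected by $A_\D$, hence accepted by $\overline{A_\D}\subseteq A_G$, which would put the arena run into $F^a$ — contradicting that $s_E$ avoids $F^a$. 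So every play played according to $\strategy_E$ must at some point be admissible, i.e.\ $\strategy_E$ is admissible; and since $s_E$ keeps the arena run out of $F^a$, by \cref{thm:arena-soundness} no prefix is successful, so $\sigma_C$ is not winning.

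For the ``only if'' direction: assume $q_0^a \in W_C = Attr_C(F^a)$. By the attractor property, $s_C$ is well-defined (the required $\mu$ always exists) and from $q_0^a$ every play following $s_C$ reaches $F^a$ within at most $|Q^a|$ moves. Take any admissible \eve strategy $\strategy_E$ for $G$; it induces an \eve strategy in the arena, and the unique play of $G$ played according to both $\sigma_C$ and $\strategy_E$ corresponds to the unique arena play following $s_C$ and that induced \eve strategy. That arena play reaches $F^a$ at some finite stage $n$; by \cref{thm:arena-soundness} the corresponding play $\rounds_n(\sigma_C,\strategy_E)$ is successful — \emph{provided} the partial plan at that point is admissible, which we need because the definition of ``winning'' only requires success against admissible behaviour, and because reaching $F^a$ could have happened via the $\overline{A_\D}$ branch (domain-rule violation) rather than via $A_\S$. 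Here I would argue: since $\strategy_E$ is admissible there is some $k$ with $\rounds_k(\sigma_C,\strategy_E)$ admissible; if the arena run reaches $F^a$ through $A_\S$ at a stage $\ge k$ we are done (admissible and successful); the remaining subtlety is to choose the stage $n$ appropriately — I would take $n$ to be the first time $\ge k$ at which the $A_\S$-component of the state is accepting, using that once the arena run enters $F^a$ it can be made to stay there or that $A_\S$ acceptance, once achieved, persists, so that $\sigma_C$ continues to be defined and the play can be extended to such an $n$.

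The main obstacle is precisely this interaction between admissibility and the two-component acceptance condition $A_G = A_\S \cup \overline{A_\D}$: reaching $F^a$ is necessary for \charlie to ``win'' in the game-graph sense, but being genuinely \emph{successful} in $G$ means satisfying the system rules while the domain rules hold, so I must carefully separate acceptance via $\overline{A_\D}$ from acceptance via $A_\S$ and align the chosen winning stage $n$ with the admissibility witness $k$ supplied by $\strategy_E$. The translation of strategies between $G$ and $G_R$ in both directions, and verifying that $[\cdot]$ is functorial with respect to rounds (which was essentially asserted in the paragraph preceding \cref{thm:arena-soundness}), is routine by comparison; I would state it as a small lemma or simply invoke the remark that $A_G^a$ "reads event sequences" exactly as $A_G$ does.
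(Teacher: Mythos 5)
Your proposal is correct and follows essentially the same route as the paper: both directions are reduced to \cref{thm:arena-soundness} together with the attractor characterisation $W_C = Attr_C(F^a)$ and the bridge $\sigma_C(\evseq)=s_C([\evseq])$. You are in fact more careful than the paper's own (very terse) proof, which leans entirely on the literal statement of \cref{thm:arena-soundness} and does not separately address the determinacy/contrapositive step, the admissibility of \eve's translated strategy, or the $A_\S$ versus $\overline{A_\D}$ acceptance cases that you rightly flag.
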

\begin{proof}

  \proofif From the definition of a winning strategy for \charlie in $G$ (\cref{def:games:winning-strategy}), we know that for every admissible strategy $\sigma_E$ for \eve, there exists $n \ge 0$ such that the play $\rounds_n(\strategy_C,\strategy_E)$ is successful. By the soundness of the arena construction (\cref{thm:arena-soundness}), we know that the event sequence $\evseq_n$ representing $\rounds_n(\strategy_C,\strategy_E)$, when seen as a word over $\Sigma^a$, is accepted by $A_G^a$. Therefore, $\evseq_n$ reaches a state in the set $F^a$ starting from $q_0^a$. By the definition of the reachability game, this means that $q_0^a\in W_C$. Thus, we have proved that if $\sigma_C$ is a winning strategy \charlie in $G$, then $q_0^a\in W_C$.
  
  \proofonlyif If $q_0^a\in W_C$, then by definition, $s_C$ is a winning strategy for \charlie in the reachability game over the arena $A_G^a$. Hence, any word over $\Sigma^a$ obtained by playing with $s_C$ is accepted by $A_G^a$, and therefore, by the soundness of the arena construction (\cref{thm:arena-soundness}), any corresponding play $\rounds$ is successful in $G$. Now, recall that $\sigma_C(\evseq)=s_C([\evseq])$ for any event sequence $\evseq$. Hence, $\rounds=\rounds(\sigma_C,\sigma_E)$ for some strategy $\sigma_E$ of \eve. As a result, we can conclude that $\sigma_C$ is a winning strategy for \charlie in $G$.
\end{proof}

Finally, we build a Controller that implements the winning strategy $\sigma_C$, 
provided it exists. First, by \cref{thm:winning-region-soundness-completeness}, 
the existence of $\sigma_C$ implies that $q^a_0 \in W_C$. Next, we define the 
following Moore machine (\cref{def:moore-machine}) based on $s_C$:

\begin{defi}[Controller]
    \label{def:controller-implementation}
Given $A_G^a=(Q^a,\Sigma^a, q_0^a, F^a, \tau^a)$, we define a Controller as $\mathcal{M} = (Q, \Sigma, \Gamma, q_0, \delta, \tau)$, where $Q = Q^a_C \cap W_C$ represents the set of states, $q_{0} = q_0^a$ is the initial state, $\Sigma = \moves_E$ is the input alphabet, $\Gamma = \moves_C$ is the output alphabet, $\delta : Q \times \Sigma \rightarrow Q$ is the transition function, and $\tau : Q \rightarrow \Gamma$ is the output function. The transition function $\delta$ and the output function $\tau$ are defined as follows:
    \begin{align*} 
        \delta(q_C, \move_E) &= \tau^a(s_C(q_C), \move_E) \\ \tau(q_C) &= s_C(q_C).
    \end{align*} 
\end{defi}

Note that by construction the states of $\mathcal{M}$ belong to the winning
region $W_C$ of $A_G^a$, and $\delta$ follows the transition function $\tau^a$
of $A^a_G$. Hence, the output of $\mathcal{M}$ after reading a word $\evseq$ is
exactly $\sigma_C(\evseq)=s_C([\evseq])$ and $\mathcal{M}$ implements
$\sigma_C$, which is a winning strategy by
\cref{thm:winning-region-soundness-completeness}.

%!TeX root = ../lmcs-gandalf22.tex

\section{Conclusions and Future Work}
\label{sec:conclusions}

Our article presents an effective procedure for synthesizing controllers for timeline-based games, whereas previously, only a proof of the \EXPTIME[2]-completeness of the problem of determining the existence of a strategy was available in the literature. We use a novel construction of a \emph{deterministic} automaton of doubly-exponential (thus optimal) size, which is then adapted to serve as the arena for the game. Then, with standard methods, we solve a reachability game on the arena to effectively compute the winning strategy for the game, if it exists.

This work paves the way for future developments. First, the procedure provided in this article can be realistically implemented and tested. It is conceivable, though, that to avoid the state explosion problem due to the doubly-exponential size of the automaton, it will be necessary to apply \emph{symbolic techniques}. Moreover, an implementation would also need a concrete syntax to specify timeline-based games. Existing languages supported by timeline-based systems (\eg NDDL~\cite{CestaO96} or ANML~\cite{SmithFC08}) might be inadequate for this purpose. Next, as in the case of \LTL, the high complexity makes one wonder whether simpler but still expressive fragments can be found. One possibility might be restricting the synchronization rules to only talk about the \emph{past} concerning the rule's trigger. For co-safety properties (\ie properties expressing the fact that something good will eventually happen) expressed in pure-past \LTL, the realizability problem goes down to being \EXPTIME-complete, and by analogy, this might happen to pure-past timeline-based games as well.

\section*{Acknowledgements}
Luca Geatti and Angelo Montanari acknowledge the support from the 2024 Italian
INdAM-GNCS project ``Certificazione, monitoraggio, ed interpretabilità in
sistemi di intelligenza artificiale'', ref. no. CUP E53C23001670001 as well as
that from the Interconnected Nord-Est Innovation Ecosystem (iNEST), which
received funding from the European Union Next-GenerationEU (PIANO NAZIONALE DI
RIPRESA E RESILIENZA (PNRR) -- MISSIONE 4 COMPONENTE 2, INVESTIMENTO 1.5 -- D.D.
1058 23/06/2022, ECS00000043). In addition, Angelo Montanari acknowledges the
support from the MUR PNRR project FAIR - Future AI Research (PE00000013) also
funded by the European Union Next-GenerationEU. This manuscript reflects only
the authors’ views and opinions, neither the European Union n or the European
Commission can be considered responsible for them. Nicola Gigante acknowledges
the support of the PURPLE project, 1st Open Call for Innovators of the AIPlan4EU
H2020 project, a project funded by EU Horizon 2020 research and innovation
programme under GA n.\ 101016442.

\bibliographystyle{alphaurl}
\bibliography{biblio}

\appendix

%!TeX root = ../lmcs-gandalf22.tex

\section{}

\runfuncmap*

\begin{proof}
  \proofif%
  We proceed by induction on the length of the event sequence $\evseq =
  \seq{\event_1, \ldots, \event_n}$.
  \begin{description}[before={\renewcommand\makelabel[1]{\bfseries ##1.}},
    labelsep=*, leftmargin=*]
  \item[Base case] % base case: empty sequence
    If $n = 0$, the only well defined function on an empty codomain is the
    function $\gamma_0: \emptyset \to \emptyset$ with an empty domain, which
    vacuously satisfies the definition of matching function and
    \Cref{lemma:function-matching-run:entire-atom,lemma:function-matching-run:partial-atom}.
    Then, the only run of $\M_\E = (V, D, \emptyset, 0)$ on an empty event
    sequence $\evseq$ is the empty run $\matchseq$ yielding $\M_\E$ itself,
    which vacuously satisfies the definition of run.
  \item[Inductive step]
    Let $\gamma:M_{n} \to [1,\ldots, n]$ be a matching function satisfying
    \Cref{lemma:function-matching-run:entire-atom,lemma:function-matching-run:partial-atom},
    and let $\restrict{\gamma}^{<n}:M_{n-1}\to[1,\ldots,n-1]$ be the restriction
    of $\gamma$ on the domain $M_{n-1}$ defined as the inverse image of
    $[1,\ldots,n-1]$ under $\gamma$, \ie, $M_{n-1} =
    \gamma^{-1}([1,\ldots,n-1])$. $\restrict{\gamma}^{<n}$ is a matching
    function for the event sequence $\slice\evseq_{1,n-1}$ and satisfies
    \cref{lemma:function-matching-run:entire-atom,lemma:function-matching-run:partial-atom}.
    By the inductive hypothesis, there exists a run $\seq{I_1, \ldots, I_{n-1}}$
    of $\M_\E$ on $\slice\evseq_{1,n-1}$, yielding a matching structure
    $\M_{n-1} = (V, D_{n-1}, M_{n-1}, t_{n-1})$. Let $I_n = \gamma^{-1}(n)$, and
    note that $I_n \subseteq \overline{M_{n-1}}$. We show that $\event_n = (A_n,
    \delta_n)$ is an \emph{$I_n$-match} event for $\M_{n-1}$ by breaking the
    proof in steps.

    \statement{$\event_n$ is an \emph{admissible} event for $\M_{n-1}$} Let $T
    \in M_{n-1}$ and $T' \not\in M_{n-1}$. If $D_{n-1}[T',T] = +\infty$,
    $\delta_n \le D_{n-1}[T',T]$ trivially holds. Otherwise, there exists an
    atom $T \before_{l,u} T'$ in $\clause$ and $D_{n-1}[T',T] = u -
    \delta(\slice\evseq_{\restrict\gamma^{<n}(T),n-1})$. We consider two cases
    based on whether $T'$ belongs to the domain of $\gamma$, or not. In the
    first case, $\gamma(T') = n$ and $\delta(\slice\evseq_{\gamma(T),n-1}) +
    \delta_n = \delta(\slice\evseq_{\gamma(T),\gamma(T')}) \le u$, by
    \Cref{lemma:function-matching-run:entire-atom}. In the second case,
    $\delta(\slice\evseq_{\gamma(T),n-1}) + \delta_n =
    \delta(\slice\evseq_{\gamma(T),n}) \le u$, by
    \Cref{lemma:function-matching-run:partial-atom}. In either case, $\delta_n
    \leq u - \delta(\slice\evseq_{\gamma(T),n-1}) = D_{n-1}[T',T]$.

    \statement{\Cref{def:match-event:good-match:start} of
      \Cref{def:match-event}} Let $a[x = v]$ be a quantified token of $\E$. If
    $\tokstart(a) \in I_n$, then $\gamma(\tokstart(a)) = n$ and by definition of
    matching function $\tokstart(x,v)\in A_n$.

    \statement{\Cref{def:match-event:good-match:end} of
      \Cref{def:match-event}}\proofif Let $\tokend(a) \not\in M_{n-1}$ be a
    possible candidate for inclusion in $I_n$. If $\tokstart(a) \in M_{n-1}$ and
    $\tokend(x, v) \in A_n$, then $\tokend(x, v)$ ends the token started at
    $\event_{\restrict\gamma^{<n}(\tokstart(a))}$; otherwise, there would exist
    $\event_i = \pair{A_i, \delta_i}$ prior to $\event_n$ such that $\tokend(x,
    v) \in A_i$, contradicting that $\restrict\gamma^{<n}$ is undefined on
    $\tokend(a)$. By definition of matching function, since $\tokend(x, v) \in
    A_n$ ends the token started at $\event_{\gamma(\tokstart(a))}$, we have
    $\gamma(\tokend(a)) = n$ and $\tokend(a) \in I_n$.

    \statement{\Cref{def:match-event:good-match:end} of
      \Cref{def:match-event}}\proofonlyif If $\tokend(a) \in I_n$, then by
    definition of matching function $\tokend(x,v) \in A_n$. Furthermore, since
    $\tokend(a) \in M_n$, \Cref{lemma:function-matching-run:entire-atom} gives
    $\gamma(\tokstart(a)) \le \gamma(\tokend(a))$ for the atom $\tokstart(a)
    \before_{l,u} \tokend(a)$ in $\clause$. By definition of event sequence,
    $\tokstart(x,v)$ and $\tokend(x,v)$ cannot appear in the same event; hence,
    $\gamma(\tokstart(a)) < \gamma(\tokend(a)) = n$ and $\tokstart(a) \in
    M_{n-1}$.

    \statement{\Cref{def:match-event:preceding-terms} of \Cref{def:match-event}}
    Let $T$ be a term in $I_n$, and let $T'\in V$ be any other term such that
    $D_{n-1}[T',T] \leq 0$. Then, $D_{n-1}[T',T]$ can either be the lower bound
    of an atom $T' \before_{l,u} T$, or the upper bound of an atom $T
    \before_{l,u} T'$ in $\clause$. In the first case, we can directly conclude
    that $T' \in M_{n-1} \cup I_n$, because $T' \in M_n$ by
    \Cref{lemma:function-matching-run:entire-atom} of $\gamma$ and $M_n =
    M_{n-1} \cup I_n$ by definition of $M_{n-1}$ and $I_n$. In the second case,
    note that $D_{n-1}[T',T] = u$, \ie, it has never been decremented because $T
    \not\in M_{n-1}$, and that upper bounds $u$ can never be negative. Thus, $u$
    is equal to $0$ and $\gamma$ satisfies $0 \le
    \delta(\slice\evseq_{\gamma(T),\gamma(T')}) \le 0$
    (\Cref{lemma:function-matching-run:entire-atom}), meaning that $\gamma(T') =
    \gamma(T)$ and $T' \in I_n$.

    \statement{\Cref{def:match-event:lower-bounds} of \Cref{def:match-event}}
    Let $T \in I_n$ and $T' \in M_{n-1}$. $D_{n-1}[T',T]$ cannot be the upper
    bound of an atom $T \before_{l,u} T'$; otherwise,
    \Cref{lemma:function-matching-run:entire-atom} would imply $T \in M_{n-1}$,
    contradicting $T \in I_n$. Thus, $D_{n-1}[T',T]$ must either represent the
    lower bound of an atom $T' \before_{l,u} T$ in $\clause$, or be equal to
    $+\infty$. In the latter case, $\delta_n \ge -D_{n-1}[T',T]$ trivially
    holds. In the former case, $D_{n-1}[T',T] = -l +
    \delta(\slice\evseq_{\restrict\gamma^{<n}(T'),n-1})$. Since $\gamma(T) = n$,
    we have $\delta(\slice\evseq_{\gamma(T'), \gamma(T)}) =
    \delta(\slice\evseq_{\gamma(T'), n}) =
    \delta(\slice\evseq_{\gamma(T'),n-1})+\delta_n \ge l$. Hence, $\delta_n \ge
    l - \delta(\slice\evseq_{\gamma(T'),n-1}) = -D_{n-1}[T',T]$.

    \statement{\Cref{def:match-event:zero-no-bounds} of \Cref{def:match-event}}
    Let $T, T' \in I_n$ be two distinct terms. Then, $\gamma(T') = \gamma(T)$
    and $\delta(\slice\evseq_{\gamma(T'),\gamma(T)}) = 0 $. If $T \before_{l,u}
    T'$ (resp., $T' \before_{l,u} T$) belongs to $\clause$, then $D_{n-1}[T,T']$
    (resp., $D_{n-1}[T',T]$) is the lower bound $l$ and equals 0 by
    \Cref{lemma:function-matching-run:entire-atom}. Otherwise, $D_{n-1}[T,T'] =
    D_{n-1}[T',T] = +\infty$.

    Hence, $\M_{n-1} \stepm[n] \M_n$ is well defined and $\seq{I_1, \ldots,
      I_n}$ is a run of $\M_\E$ on $\evseq$ yielding $\M_n$.
  \end{description}

  \proofonlyif We proceed by induction on the length of the event sequence $\evseq =
  \seq{\event_1, \ldots, \event_n}$.

  \begin{description}[before={\renewcommand\makelabel[1]{\bfseries ##1.}},
    labelsep=*, leftmargin=*]
  \item[Base case] An empty run $\matchseq$ yields $\M_\E = (V, D, \emptyset,
    0)$ itself. Then the function $\gamma_0: \emptyset \to \emptyset$ vacuously
    satisfies the definition of matching function and
    \Cref{lemma:function-matching-run:entire-atom,lemma:function-matching-run:partial-atom}.
  \item[Inductive step]
    Let $\matchseq = \seq{I_1, \ldots, I_n}$ be a run of $\M_\E$ on $\evseq$,
    yielding a matching structure $\M_n = (V, D_n, M_n, t_n)$. Note that
    $\slice\matchseq_{1,n-1}$ is a run of $\M_\E$ on $\slice\evseq_{1,n-1}$
    yielding a matching structure $\M_{n-1} = (V, D_{n-1}, M_{n-1}, t_{n-1})$.
    By the inductive hypothesis, there exists a matching function $\gamma_{<n}:
    M_{n-1} \to [1, \ldots, n-1]$ satisfying
    \Cref{lemma:function-matching-run:entire-atom,lemma:function-matching-run:partial-atom}.
    Let $\gamma: M_{n} \to [1,\ldots, n]$ be the extension of $\gamma_{<n}$ to
    $M_n$, such that $\gamma(T) = n$, for all $T\in I_n$.

    \statement{$\gamma$ is a matching function}
    \Cref{def:matching-function:nodes,def:matching-function:tokens} hold for all
    the terms already present in the domain of $\gamma_{<n}$. For every term in
    $I_n$, \Cref{def:matching-function:nodes} for $\gamma$ follows from
    \Cref{def:match-event:good-match} of $I_n$-match event. Let $\tokstart(a),
    \tokend(a) \in M_n$ be two terms not both already present in $M_{n-1}$,
    meaning that $\tokstart(a) \in M_{n-1}$ and $\tokend(a) \in I_n$, for some
    quantified token $a[x=v]$ in $\E$. By definition of $I_n$-match event,
    $\event_n = (A_n, \delta_n)$ is such that $\tokend(x,v) \in A_n$ and no
    other event in $\slice\event_{\gamma_{<n}(T),n-1}$ contains an action
    $\tokend(x, v)$, otherwise $\tokend(a)$ would already belong to $M_{n-1}$
    (by \Cref{def:match-event:good-match:end} of $I$-match event).
    % according to \Cref{def:match-event:good-match:end} of the definition of
    % $I$-match event.
    Thus, $\tokend(x, v) \in A_n$ ends the token started at
    $\event_{\gamma(\tokstart(a))}$, and $\gamma(\tokstart(a))$ and
    $\gamma(\tokend(a))$ correctly identify the endpoints of such token.

    \statement{\Cref{lemma:function-matching-run:entire-atom} of
      \Cref{lemma:function-matching-run}} Let $T \before_{l, u} T'$ be an atom
    in $\clause$, and note that $\gamma$ already satisfies
    \Cref{lemma:function-matching-run:entire-atom} for every $T' \in M_{n-1}$.
    If $T'\in I_n$ instead, consider the entry $D_{n-1}[T,T']$ representing the
    lower bound $l$ of the aforementioned atom. If $D_{n-1}[T,T'] \le 0$,
    \Cref{def:match-event:preceding-terms} of $I$-match event gives $T \in
    M_{n-1} \cup I_n = M_n$. If $D_{n-1}[T,T'] > 0$, $D_{n-1}[T,T']$ no longer
    stores its initial value $-l \leq 0$, meaning that $T$ must have been
    previously matched and $T \in M_{n-1} \subseteq M_n$. In either case, $T\in
    M_n$ and $\gamma(T) \le \gamma(T')$, because $\gamma(T) \le n$.
    % proof of lower and upper bounds of \gamma
    If $T \in I_n$, then $\delta(\slice\evseq_{\gamma(T),\gamma(T')}) = 0 \le
    u$, is trivially satisfied by any upper bound $u$. Furthermore, by
    \Cref{def:match-event:zero-no-bounds} of $I$-match event, either the lower
    bound $D_{n-1}[T,T'] = 0$ or the upper bound $D_{n-1}[T',T] = 0$, and they
    both equal their initial values $l$ and $u$. Note that the former case is
    also implied by the latter, so that $l = 0$ and $l \le
    \delta(\slice\evseq_{\gamma(T),\gamma(T')})$. If $T \in M_{n-1}$, by
    \Cref{def:match-event:lower-bounds} of $I$-match event, $\delta_n \ge
    -D[T,T'] = l - \delta(\slice\evseq_{\gamma(T),n-1})$. Hence, $l \le
    \delta(\slice\evseq_{\gamma(T),n-1}) + \delta_n =
    \delta(\slice\evseq_{\gamma(T),\gamma(T')})$. While $\delta_n \le
    D_{n-1}[T',T] = u - \delta(\slice\evseq_{\gamma(T),n-1})$, since $\event_n$
    is an admissible event for $\M_{n-1}$. Hence,
    $\delta(\slice\evseq_{\gamma(T),n-1}) + \delta_n =
    \delta(\slice\evseq_{\gamma(T),\gamma(T')}) \le u$.

    \statement{\Cref{lemma:function-matching-run:partial-atom} of
      \Cref{lemma:function-matching-run}} Let $T \before_{l,u} T'$ be an atom in
    $\clause$ such that $T \in M_n$ and $T' \not\in M_n$. Since $\event_n$ is an
    admissible event for $\M_{n-1}$, $\delta_n \le D_{n-1}[T',T] = u -
    \delta(\slice\evseq_{\gamma(T),n-1})$. Hence,
    $\delta(\slice\evseq_{\gamma(T),n-1}) + \delta_n =
    \delta(\slice\evseq_{\gamma(T),n}) \le u$. \qedhere
  \end{description}
\end{proof}

\superset*
\begin{proof}
  Let $\M_\E \runm*[r]\M_r = (V, D_r, M_r, t_r)$ and $\M_\E \runm*[s] \M_s = (V,
  D_s, M_s, T_s)$, with \linebreak $\gamma_s(\tokstart(a_0)) \le \gamma_r(\tokstart(a_0))$.
  Let $M = M_r \cup M_s$ and $\gamma: M \to [1,\ldots,n]$ be a function defined
  as:
  \[
    \gamma(T) =
    \begin{cases}
      \gamma_s(T) &\text{if } T\in M_s\cap M_r\text{ and }\gamma_s(T)\leq\gamma_r(T)\\
      \gamma_r(T) &\text{if } T \in M_s \cap M_r \text{ and }\gamma_s(T)>\gamma_r(T)\\
      \gamma_s(T) &\text{if } T \in M_s \setminus M_r\\
      \gamma_r(T) &\text{if } T \in M_r \setminus M_s
    \end{cases}
  \]

  \statement{$\gamma$ is a matching function} \Cref{def:matching-function:nodes}
  of \Cref{def:matching-function} for $\gamma$ follows from our hypothesis on $\gamma_s$
  and $\gamma_r$. Regarding \Cref{def:matching-function:tokens}, let
  $\tokstart(a), \tokend(a) \in M$ for some quantified token $a[x=v]$ in $\E$.
  If $\gamma_s$ and $\gamma_r$ map the endpoints of $a$ to the same token in
  $\evseq$, then $\gamma(\tokstart(a))$ and $\gamma(\tokend(a))$ correctly
  identify the endpoints of that token. If instead $\gamma_s$ and $\gamma_r$ map
  $a$ to two distinct tokens in $\evseq$, then $\gamma$ would match $a$
  according to the function whose token comes first, correctly identifying the
  endpoints of such token.

  \statement{$\gamma$ satisfies
    \Cref{lemma:function-matching-run:entire-atom,lemma:function-matching-run:partial-atom}
    of \Cref{lemma:function-matching-run}} Let $T \before_{l,u} T'$ be an atom
  in $\clause$. If $T' \in M$, then either $T' \in M_s$, and $T \in M_s
  \subseteq M$, or $T' \in M_r$, and $T \in M_r \subseteq M$. If $\gamma$ maps
  both terms with either $\gamma_s$ or $\gamma_r$, then $\gamma(T) \leq
  \gamma(T')$ and $l \leq \delta(\slice\evseq_{\gamma(T),\gamma(T')}) \leq u$
  immediately follows. If instead
  $\gamma(T) = \gamma_s(T)$ and $\gamma(T') = \gamma_r(T')$, then $T' \in
    M_r$ and $T \in M_s \cap M_r$. By definition of $\gamma$, $\gamma_s(T) \leq
    \gamma_r(T)$, and, by \Cref{lemma:function-matching-run:entire-atom} for
    $\gamma_r$, $\gamma_r(T) \leq \gamma_r(T')$. Hence, $\gamma(T) \leq
    \gamma(T')$. If $T' \in M_s$, then $\gamma_s(T') > \gamma_r(T')$, and:
    \begin{align*}
      l &\leq\delta_{\gamma_r(T),\gamma_r(T')}&&\text{\Cref{lemma:function-matching-run:entire-atom} for } \gamma_r\\
        &\leq \delta_{\gamma_s(T),\gamma_r(T')}&&\text{}\gamma_s(T)\leq\gamma_r(T)\\
        &< \delta_{\gamma_s(T),\gamma_s(T')} &&\text{} \gamma_s(T') > \gamma_r(T')\\
        &\leq u &&\text{\Cref{lemma:function-matching-run:entire-atom} for } \gamma_s
    \end{align*}
    otherwise:
    \begin{align*}
      l &\leq \delta_{\gamma_r(T),\gamma_r(T')} &&\text{\Cref{lemma:function-matching-run:entire-atom} for } \gamma_r\\
        &< \delta_{\gamma_s(T),\gamma_r(T')}&&\text{} \gamma_s(T) \leq \gamma_r(T)\\
        &\leq \delta_{\gamma_s(T),n} &&\text{} \gamma_r(T') \leq n\\
        &\leq u &&\text{\Cref{lemma:function-matching-run:partial-atom} for } \gamma_s
    \end{align*}
    The case for $\gamma(T) = \gamma_r(T)$ and $\gamma(T') = \gamma_s(T')$ is
    completely symmetrical.

    Lastly, if $T'\not\in M$, but $T \in M$, then either $\gamma(T)=\gamma_s(T)$
    or $\gamma(T) = \gamma_r(T)$, and
    \Cref{lemma:function-matching-run:partial-atom} for $\gamma$ follows from
    \Cref{lemma:function-matching-run:partial-atom} for $\gamma_s$ and
    $\gamma_r$.
\end{proof}

\residualexist*

\begin{proof}
  Let $\gamma(\tokstart(a_0)) = s$, assuming there is no residual matching
  structure $\M_k$ in the sequence $\seq{\M_s, \ldots, \M_{n-1}}$, then for
  every matching structure $\M_i=(V, D_i, M_i, t_i)$, where $s \leq i < n$,
  there exists a pair of terms $\pair{T, T'}$ such that $T \in M_i$ and
  $T'\not\in M_i$, and their distance $D_i[T',T]$ has a finite upper bound. Let
  $E \subseteq V \times V$ be the set that collects all pairs $(T, T')$ for the
  matching structures $\M_i$. We define $\delta_{T, T'}$ as
  $\delta(\slice\evseq_{\gamma(T),\gamma(T')})$ if $\gamma(T')$ is defined, or
  as $\delta(\slice\evseq_{\gamma(T),n})$ otherwise. Let $\delta_E =
  \sum_{\pair{T,T'} \in E} \delta_{T,T'}$ and note that $\delta_E \ge
  \delta(\slice\evseq_{\gamma(\tokstart(a_0)),n})$, because every position in
  $\slice\evseq_{\gamma(\tokstart(a_0)),n}$ is covered by some distance
  $\delta_{T,T'}$. Moreover, each pair $(T, T') \in E$ corresponds to an atom of
  the form $T \before_{l,u} T'$ in $\clause$. According to
  \Cref{lemma:function-matching-run}, we have $\delta_{T, T'} \leq u$, and
  therefore, $\delta_E \leq \window(P)$. Hence, we have
  $\delta(\slice\evseq_{\gamma(\tokstart(a_0)),n}) \leq \delta_E \leq
  \window(P)$: a contradiction hence proving the existence of a residual
  matching structure $M_k$. 
\end{proof}

\soundnessCompleteness*

\begin{proof}
  \proofonlyif Let $\evseq = \seq{\event_1,\ldots,\event_n}$ be a solution plan
  for $P$, and let $\stateseq = \seq{q_0,\ldots,q_n}$ be the run of $\A_P$ on
  $\evseq$. We first show that the sink state is never reached, and then that
  $q_n$ is a final state.

  Let $\event_s = (A_s, \delta_s)$ be the trigger event of a rule $\Rule \equiv
  a_0[x_0 = v_0] \implies \E_1 \lor \dots \lor \E_m$, \ie, $\tokstart(x_0,v_0)
  \in A_s$. Since $\evseq$ is a solution plan, there exist tokens satisfying an
  existential statement $\E$ of $\Rule$ for the trigger $\event_s$. Hence, by
  \cref{lemma:function-matching-run,obs:matching-functions} there exists a run
  $\M_\E \runm* \M_n$, yielding a \emph{closed} matching structure $\M_n$, such
  that $\gamma(\tokstart(a_0)) = s$.

  Let $\overline{\M} = \seq{\M_\E, \M_1,\ldots,\M_{n}}$ be the sequence of all
  the matching structures involved in such run. Note that, by construction
  (\cref{sec:automata-construction}), the states of $\stateseq$ induce all the
  possible runs for the \emph{initial} matching structures of $P$ that can be
  defined on $\evseq$. In particular, the run $\gamma$ must be one of them.
  However, only a subsequence of the matching structures $\overline{M}$ will
  appear in the states of the run $\stateseq$. Indeed, we can identify three key
  points for the sequence $\overline{M}$: the least position $s$ such that
  $\M_s$ is \emph{active} (corresponding to $\gamma(\tokstart(a_0))$), the least
  position $h$ following $s$ such that $\M_h$ no longer belongs to the component
  $\Upsilon$ of the states in $\slice\stateseq_{h, n}$, either because $\M_h$ is
  \emph{closed} or because $\delta(\slice\evseq_{s,h}) > \window(P)$, and the
  least position $k$ following $h$ such that $\M_k$ no longer belongs to the
  component $\Delta$ of the states in $\slice\stateseq_{k,n}$, either because
  $\M_k$ is \emph{closed} or because it gets discarded in favour of the matching
  structures of a later trigger event.

  Every matching structure in $\slice{\overline{M}}_{1,h-1}$ belongs to the
  component $\Upsilon$ of a corresponding state in $\slice\stateseq_{1,h-1}$, so
  the set $\Upsilon$ of the state $q_{s-1}$ is such that $\M_s \in
  \step_{\event_s}(\Upsilon_\bot)$, satisfying condition
  \ref{dfa:delta:trigger-capture} of \cref{sec:automata-construction} for the
  trigger event $\event_s$. Matching structures $\slice{\overline{M}}_{s+1,h}$
  instead belong to the set $\step_\event(\Upsilon^\Rule_t)$, for the partition
  $\Upsilon^\Rule_t$ tracking the satisfaction of the trigger event $\event_s$
  of every state $\slice\stateseq_{s,h-1}$. Hence, all such states satisfy
  condition \ref{dfa:delta:no-failed-step} of \cref{sec:automata-construction}.

  We now show that no \emph{active} matching structures for the trigger event
  $\event_s$ exists after some state $q_h$, following $q_s$ in $\stateseq$. Note
  that the run $\gamma$ yields a \emph{closed} matching structure, and if it
  does so within $\window(P)$ time units from the event
  $\event_{\tokstart(a_0)}$, we identified such position as the closed matching
  structure $\M_h$. So that the state $q_{h-1}$ is such that $\M_h \in
  \step_{\event_h}(\Upsilon^\Rule_t)$, for the partition $\Upsilon^\Rule_t$
  tracking the trigger event $\event_s$, and
  $\step_{\event_h}(\Upsilon^\Rule_t)$ is discarded from $q_h$.

  If instead $\gamma$ yields a \emph{closed} matching structure after
  $\window(P)$ time units from the event $\event_{\tokstart(a_0)}$, lets
  identify such position as $\M_j$, with $j \le n$. If $\M_{j-1}$ belongs to the
  set $\Delta(\E)$ of the state $q_{j-1}$, then $\M_j \in
  \step_{\event_j}(\Delta(\E))$, so that $\step_{\event_j}(\Delta(\E))$ is
  \emph{closed} and discarded from $q_j$, alongside all the other matching
  structures in $\Delta(\E')$, for every $E' \in \Phi(\E)$, \ie, for every other
  existential statement $\E'$ of $\Rule$ still tracking the trigger $\event_s$.
  If instead $\M_{j-1}$ for the trigger event $\event_s$ does not belong to the
  set $\Delta(\E)$ of the state $q_{j-1}$, by construction
  (\cref{sec:automata-construction}), there exist a state $q_h$ in which the
  matching structures tracking $\event_s$ have been replaced by those of a later
  event, and they no longer appear in $\Delta(\E)$ from $q_h$ onwards.

  Since $\evseq$ is a solution plan, the previous argument holds for all the
  trigger events in $\evseq$ of any rules in $S$. Hence, conditions
  \labelcref{dfa:delta:trigger-capture,dfa:delta:no-failed-step} are always met,
  \ie, the sink state is never reached, and no active matching structures belong
  to $q_n$, making it a final state.

  \proofif Let $\evseq = \seq{\event_1,\ldots,\event_n}$ be an event sequence
  accepted by $\A_P$ and let $\rho =\seq{q_0,\ldots,q_n}$ be its accepting run.
  We have to show that the plan corresponding to $\evseq$ is a solution plan for
  $P$, \ie, for every event triggering a rule $\Rule$ in $S$, at least one of
  the existential statements of $\Rule$ is satisfied by $\evseq$.

  Let $\event_s = (A_s, \delta_s)$ be an event in $\evseq$ triggering a rule
  $\Rule \equiv a_0[x_0 = v_0] \implies \E_1 \lor \dots \lor \E_m$, \ie,
  $\tokstart(x_0,v_0) \in A_s$. Since the sink state is never visited in an
  accepting run, the state $q_s$, reached upon reading the event $\event_s$, is
  such that the partition $\Upsilon^\Rule_0$, tracking the satisfaction of the
  trigger event $\event_s$, is not empty. For the same reason, the partition
  $\Upsilon^\Rule_t$ tracking $\event_s$ in every state following $q_s$ can
  never be empty as a result of the function $\step_\event$. However, since the
  final state $q_n$ does not contain any \emph{active} matching structure, there
  must exists a state $q_h$ in $\stateseq$ whose partition
  $\step_{\event_{h+1}}(\Upsilon^\Rule_t)$ gets discarded from $q_{h+1}$. This
  can happen either because $\step_{\event_{h+1}}(\Upsilon^\Rule_t)$ is a
  \emph{closed} set, or because the matchings structures in
  $\step_{\event_{h+1}}(\Upsilon^\Rule_t)$ get promoted to the component
  $\Delta$. In the first case, we can conclude that there exists a run $\M_\E
  \runm* \M_n$ for the initial matching structure $\M_\E$ of an existential
  statement $\E$ of $\Rule$ such that $\M_n$ is \emph{closed} and
  $\gamma(\tokstart(a_0)) = s$, hence, by
  \cref{obs:matching-functions,lemma:function-matching-run}, the trigger event
  $\event_s$ satisfies $\Rule$.

  In the second case, let $\Psi$ be the set of existential statements having an
  active matching structure in $\step_{\event_{h+1}}(\Upsilon^\Rule_t)$, so that
  we can identify them as the sets $\Delta(\E)$, for $\E \in \Psi$, in the
  states from $q_{h+1}$ onwards. By \cref{lemma:residual-matching-structure},
  every such set contains a \emph{residual} matching structure. Hence, by
  \cref{obs:residual-run}, they can become empty only if, at some state $q_{k}$
  following $q_h$, $\step_{\event_{k+1}}(\Delta(\E))$ contains a \emph{closed}
  matching structure for some existential statement $\E \in \Psi$. Note that
  the run $\stateseq$ is an accepting run, so every non-empty set
  $\Delta(\E)$ must become empty before the end of the run. Hence, $q_k$ is
  guaranteed to exist.

  However, it may be the case that, by the time
  $\step_{\event_{k+1}}(\Delta(\E))$ is \emph{closed}, $\Delta(\E)$ no longer
  contains the matching structures for the trigger event $\event_s$, but those
  for a later trigger event $\event_r$ of $\Rule$. Since the sets
  $\Delta(\E)$ store only the matching structures tracking the most recent
  trigger event older than $\window(P)$. Thus, if
  $\step_{\event_{k+1}}(\Delta(\E))$ contains a \emph{closed} matching structure
  for $\event_s$, we can directly assert the existence of a run for $\M_\E$
  implying the satisfaction of $\Rule$ for the trigger event $\event_s$. If
  instead $\step_{\event_{k+1}}(\Delta(\E))$ contains a \emph{closed} matching
  structure $\M_r$ for a later event $\event_r$, there exists a run $\M_\E
  \runm*[r] \M_r$, such that $\gamma_r(\tokstart(a_0)) = r$. Furthermore, by a
  previous consideration on $q_{h+1}$, there exists a run $\M_\E
  \xlongrightarrow{\slice\evseq_{1,h+1},\gamma_s} \hat\M_{h+1}$, yielding a
  \emph{residual} matching structure $\hat\M_{h+1}$, and, by
  \cref{obs:residual-run}, such run can be extended on the entire event sequence
  $\M_\E \runm*[s] \hat\M_n$, to yield a \emph{residual} matching structure
  $\hat\M_n$. Given $\M_\E \runm*[r] \M_r$ and $\M_\E \runm*[s] \hat\M_n$, with
  $\gamma_s(\tokstart(a_0)) \le \gamma_r(\tokstart(a_0))$, by
  \cref{lemma:matching-structure-superset}, there exists a run $\M_\E \runm*[s]
  \M_n$ yielding a matching structure $\M_n$ matching as many terms as $\M_r$
  and such that $\gamma_s(\tokstart(a_0)) = s$. Hence, $\M$ is a \emph{closed}
  matching structure for the existential statement $\E$, and, by
  \cref{obs:matching-functions,lemma:function-matching-run}, $\Rule$ satisfies
  the trigger event $\event_s$.

  Furthermore, all the value duration functions are satisfied by
  the tokens in $\evseq$, being encoded as synchronisation rules by the
  automaton $S_P$. Meanwhile, the automaton $T_P$ guarantees the fulfilment of
  the value transition functions. Hence, we can conclude that $\evseq$ is a
  solution plan for $P$, because every rule in $S$ is satisfied, as well as the
  value duration and value transition functions of every state variable.
\end{proof}

%%% Local Variables:
%%% TeX-master: "../lmcs-gandalf22.tex"
%%% End:

\end{document}